\documentclass[review]{elsarticle}

\usepackage{lineno,hyperref}
\modulolinenumbers[5]

\usepackage[ruled, vlined, linesnumbered]{algorithm2e}
\usepackage{lmodern}
\usepackage{amsmath}
\usepackage{mathtools}
\usepackage{amsthm}
\SetKwProg{Init}{init}{}{}
\usepackage{scalerel,stackengine}
\newtheorem{lem}{Lemma}
\newtheorem{theo}{Theorem}
\theoremstyle{definition} \newtheorem{defn}{Definition}
\newcommand{\indepe}{\mathop{\perp\!\!\!\perp}}

\stackMath
\newcommand\reallywidehat[1]{%
\savestack{\tmpbox}{\stretchto{%
  \scaleto{%
    \scalerel*[\widthof{\ensuremath{#1}}]{\kern-.6pt\bigwedge\kern-.6pt}%
    {\rule[-\textheight/2]{1ex}{\textheight}}
  }{\textheight}%
}{0.5ex}}%
\stackon[1pt]{#1}{\tmpbox}%
}
\parskip 1ex
\def\rightarrowCirc{\hbox{$\circ$}\kern-1.5pt\hbox{$\rightarrow$}}
\def\circHyphenCirc{\hbox{$\circ$}\kern-1.5pt\hbox{$-$}\kern-1.5pt\hbox{$\circ$}}
\def\circHyphen{\hbox{$\circ$}\kern-1.5pt\hbox{$-$}}
\newcommand{\argmin}{\mathop{\rm argmin}\limits}

\journal{a journal}









\bibliographystyle{elsarticle-num}

\begin{document}

\begin{frontmatter}

\title{Causal discovery of linear non-Gaussian acyclic models in the presence of latent confounders}

\author[mymainaddress]{Takashi Nicholas Maeda\corref{mycorrespondingauthor}}
\cortext[mycorrespondingauthor]{Corresponding author}

\author[mymainaddress,mysecondaryaddress]{Shohei Shimizu}

\address[mymainaddress]{RIKEN, Tokyo, Japan}
\address[mysecondaryaddress]{Shiga University, Shiga, Japan}

\begin{abstract}
Causal discovery from data affected by latent confounders is an important and difficult challenge. Causal functional model-based approaches have not been used to present variables whose relationships are affected by latent confounders, while some constraint-based methods can present them. This paper proposes a causal functional model-based method called repetitive causal discovery (RCD) to discover the causal structure of observed variables affected by latent confounders. RCD repeats inferring the causal directions between a small number of observed variables and determines whether the relationships are affected by latent confounders. RCD finally produces a causal graph where a bi-directed arrow indicates the pair of variables that have the same latent confounders, and a directed arrow indicates the causal direction of a pair of variables that are not affected by the same latent confounder. The results of experimental validation using simulated data and real-world data confirmed that RCD is effective in identifying latent confounders and causal directions between observed variables.
\end{abstract}

\begin{keyword}
Causal discovery\sep Causal structures \sep Latent confounders\end{keyword}

\end{frontmatter}

\section{Introduction}
Many scientific questions aim to find the causal relationships between variables rather than only find the correlations. While the most effective measure for identifying the causal relationships is controlled experimentation, such experiments are often too costly, unethical, or technically impossible to conduct. Therefore, the development of methods to identify causal relationships from observational data is important.\\
Many algorithms that have been developed for constructing causal graphs assume that there are no latent confounders  (e.g., PC~\cite{Spirtes91}, GES~\cite{chickering2002}, and LiNGAM~\cite{shimizu2006}). They do not work effectively if this assumption is not satisfied. Conversely, FCI~\cite{fci} is an algorithm that presents the pairs of variables that have latent confounders. However, since FCI infers causal relations on the basis of the conditional independence in the joint distribution, it cannot distinguish between the two graphs that entail exactly the same sets of conditional independence. Therefore, to understand the causal relationships of variables where latent confounders exist, we need a new method that satisfies the following criteria: (1)  the method should accurately (without being biased by latent confounders) identify the causal directions between the observed variables that are not affected by latent confounders, and (2)  it should present variables whose relationships are affected by latent confounders.\\
Compared to the constraint-based causal discovery methods (e.g., PC~\cite{Spirtes91} and FCI~\cite{fci}), causal functional model-based approaches~\cite{hoyer2009,Mooij:2009,yamada2010,shimizu2011,peters2014} can identify the entire causal model under proper assumptions. They represent an effect $Y$ as a function of direct cause $X$. They infer that variable $X$ is the cause of variable $Y$ when $X$ is independent of the residual obtained by the regression of $Y$ on $X$ but not independent of $Y$.\\
Most of the existing methods based on causal functional models identify the causal structure of multiple observed variables that form a directed acyclic graph (DAG) under the assumption that there is no latent confounder. They assume that the data generation model is acyclic, and that the external effects of all the observed variables are mutually independent. Such models are called additive noise models (ANMs). Their methods discover the causal structures by the following two steps: (1) identifying the causal order of variables and (2) eliminating unnecessary edges. DirectLiNGAM~\cite{shimizu2011}, which is a variant of LiNGAM~\cite{shimizu2006}, performs regression and independence testing to identify the causal order of multiple variables. DirectLiNGAM finds a {\it root} (a variable that is not affected by other variables) by performing regression and independence testing of each pair of variables. If a variable is exogenous to the other variables, then it is regarded as a root. Thereafter, DirectLiNGAM removes the effect of the root from the other variables and finds the next root in the remaining variables. DirectLiNGAM determines the causal order of variables according to the order of identified roots. RESIT~\cite{peters2014}, a method extended from Mooij et al.~\cite{Mooij:2009} identifies the causal order of variables in a similar manner by performing an iterative procedure. In each step, RESIT finds a {\it sink} (a variable that is not a cause of the other variables). A variable is regarded as a sink when it is endogenous to the other variables. RESIT disregards the identified sinks and finds the next sink in each step. Thus, RESIT finds a causal order of variables. DirectLiNGAM and RESIT then construct a complete DAG, in which each variable pair is connected with the directed edge based on the identified causal order. Thereafter, DirectLiNGAM eliminates unnecessary edges using AdaptiveLasso~\cite{zou}. RESIT eliminates each edge $X\rightarrow Y$ if $X$ is independent of the residual obtained by the regression of $Y$ on $Z/\{X \}$ where $Z$ is the set of causes of $Y$ in the complete DAG.\par
Causal functional model-based methods effectively discover the causal structures of observed variables generated by an additive noise model when there is no latent confounder. However, the results obtained by these methods are likely disturbed when there are latent confounders because they cannot find a causal function between variables affected by the same latent confounders. Furthermore, the causal functional model-based approaches have not been used to show variables that are affected by the same latent confounder, as FCI does. \par
This paper proposes a causal functional model-based method called repetitive causal discovery (RCD) to discover the causal structures of the observed variables that are affected by latent confounders. RCD is aimed at producing causal graphs where a bi-directed arrow indicates the pair of variables that have the same latent confounders, and a directed arrow indicates the direct causal direction between two variables that do not have the same latent confounder. It assumes that the data generation model is linear and acyclic, and that external influences are non-Gaussian. Many causal functional model-based approaches discover causal relations by identifying the causal order of variables and eliminating unnecessary edges. However, RCD discovers the relationships by finding the direct or indirect causes ({\it ancestors}) of each variable, distinguishing direct causes ({\it parents}) from indirect causes, and identifying the pairs of variables that have the same latent confounders.
\par
Our contributions can be summarized as follows:
\begin{itemize}
	\item We developed a causal functional model-based method that can present variable pairs affected by the same latent confounders.
	\item The method can also identify the causal direction of variable pairs that are not affected by latent confounders.
	\item The results of experimental validation using simulated data and real-world data confirmed that RCD is effective in identifying latent confounders and causal directions between observed variables.
\end{itemize} 

A briefer version of this work without detailed proofs can be found in \cite{maeda20a}.

\section{Problem definition}
\label{section:problem}
\subsection{Data generation process}
\label{section:datageneration}
This study aims to analyze the causal relations of observed variables confounded by unobserved variables. We assume that the relationship between each pair of (observed or unobserved) variables is linear, and that the external influence of each (observed or unobserved) variable is non-Gaussian. In addition, we assume that (observed or unobserved) data are generated from a process represented graphically by a directed acyclic graph (DAG). The generation model is formulated using Equation~\ref{equation:lingamco}.
\begin{equation}
\label{equation:lingamco}
	x_{i} = \sum_j b_{ij}x_{j} + \sum_k \lambda_{ik}f_{k}+e_{i}
\end{equation}
where \begin{math}x_{i}\end{math} denotes an observed variable, \begin{math} b_{ij} \end{math} is the causal strength from \begin{math}x_{j}\end{math} to \begin{math}x_{i}\end{math}, \begin{math} f_{k}\end{math} denotes a latent confounder, \begin{math}\lambda_{ik}\end{math} denotes the causal strength from \begin{math}f_{k}\end{math} to \begin{math} x_{i}\end{math}, and \begin{math} e_{i} \end{math} is an external effect. The external effect \begin{math}e_{i}\end{math} and the latent confounder \begin{math}f_{k}\end{math} are assumed to follow non-Gaussian continuous-valued distributions with zero mean and nonzero variance and are mutually independent. The zero/nonzero pattern of \begin{math}b_{ij}\end{math} and \begin{math}\lambda_{ik}\end{math} corresponds to the absence/existence pattern of directed edges. Without loss of generality~\cite{HOYER2008362}, latent confounders \begin{math}f_{k}\end{math} are assumed to be mutually independent. In a matrix form, the model is described as Equation~\ref{equation:lingamcoma}:

\begin{equation}
\label{equation:lingamcoma}
	{\bf x} = {\bf Bx} + {\bf \Lambda f} + {\bf e}
\end{equation}

where the connection strength matrices \begin{math}{\bf B}\end{math} and \begin{math}{\bf \Lambda}\end{math} collect \begin{math}b_{ij}\end{math} and \begin{math}\lambda_{ik}\end{math}, and the vectors \begin{math}{\bf x}\end{math}, \begin{math}{\bf f}\end{math}  and \begin{math}{\bf e}\end{math} collect \begin{math}x_{i}\end{math}, \begin{math}f_{k}\end{math} and \begin{math}e_{i}\end{math}.

\subsection{Research goals}
This study has two goals. First, we extract the pairs of observed variables that are affected by the same latent confounders. This is formulated by \begin{math}{\bf C}\end{math} whose element \begin{math}c_{ij}\end{math} is defined by Equation~\ref{equation:confounderdef}:
\begin{equation}
\label{equation:confounderdef}
	c_{ij} = \begin{cases}
    0 & (\text{if } \forall k, \lambda_{ik}=0 \lor \lambda_{jk}= 0) \\
    1 & (\text{otherwise})
  \end{cases}
\end{equation}
Element \begin{math}c_{ij}\end{math} equals 0 when there is no latent confounder affecting variables \begin{math}x_{i}\end{math} and \begin{math}x_{j}\end{math}. Element \begin{math}c_{ij}\end{math} equals 1 when variables \begin{math}x_{i}\end{math} and \begin{math}x_{j}\end{math} are affected by the same latent confounders.\\
The second goal is to estimate the absence/existence of the causal relations between the observed variables that do not have the same latent confounder. This is defined by a matrix \begin{math}{\bf P}\end{math} whose element \begin{math}p_{ij}\end{math} is expressed by Equation~\ref{equation:causalrelationdef}:
\begin{equation}
\label{equation:causalrelationdef}
	p_{ij} = \begin{cases}
    0 & (\text{if } b_{ij} = 0 \text{ or } c_{ij} = 1) \\
    1 & (\text{otherwise})
  \end{cases}
\end{equation}
$p_{ij}=0$ when $c_{ij}=1$ because we do not aim to identify the causal direction between the observed variables that are affected by the same latent confounders.\par

Finally, RCD produces a causal graph where a bi-directed arrow indicates the pair of variables that have the same latent confounders, and a directed arrow indicates the causal direction of a pair of variables that are not affected by the same latent confounder. For example, assume that using the data generation model shown in Figure~\ref{figure:intro}-(a), our final goal is to draw a causal diagram shown in Figure~\ref{figure:intro}-(b), where variables $f_1$ and $f_2$ are latent confounders, and variables A--H are observed variables.
\begin{figure}[t]
\centering
\includegraphics[width=8.0cm]{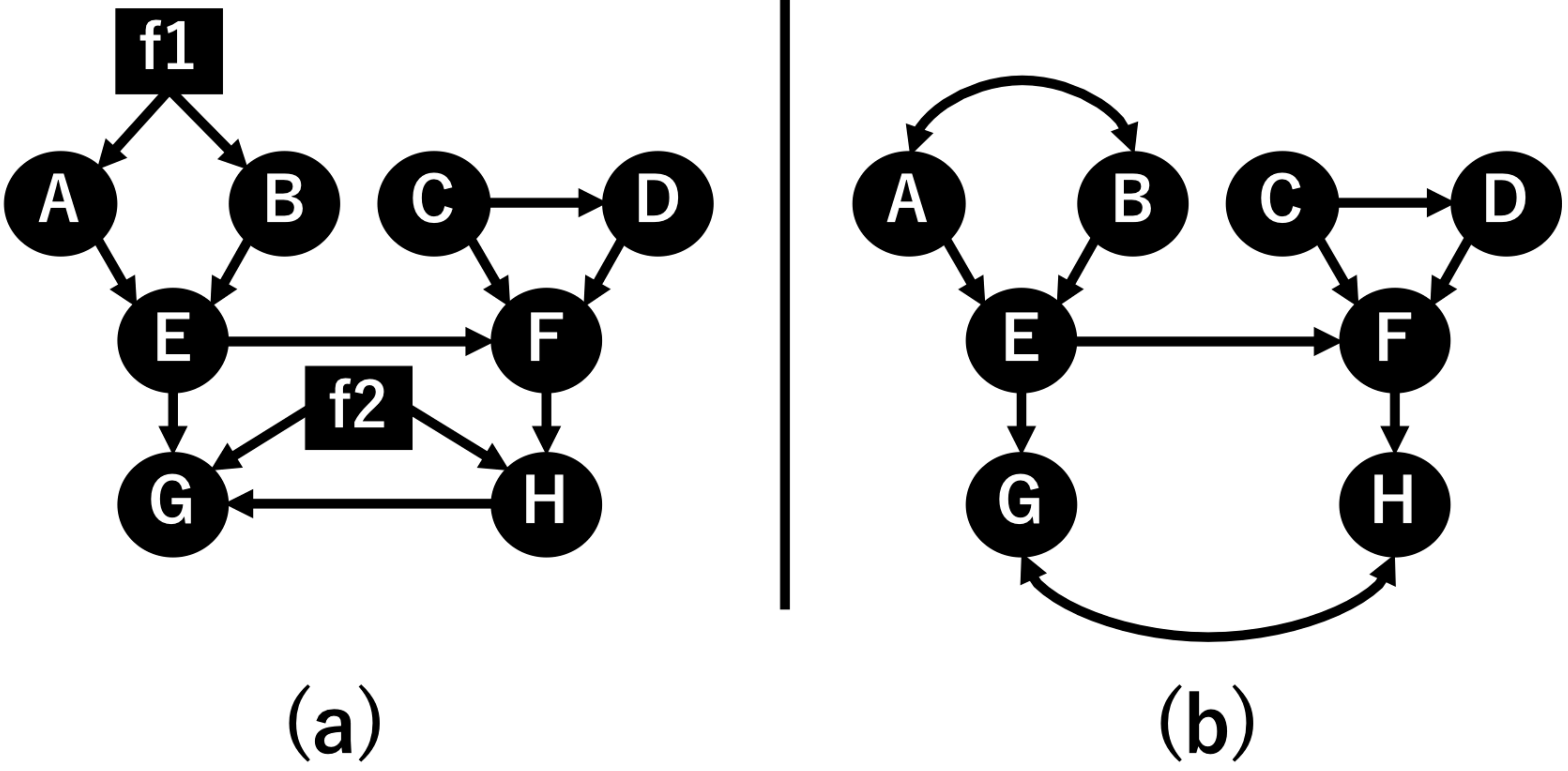}
\caption{(a) Data generation model ($f_1$ and $f_2$ are latent confounders). (b) Causal graph that RCD produces. A bi-directed arrow indicates that two variables are affected by the same latent confounders.}
\label{figure:intro}
\end{figure}

\section{Proposed Method}
\label{section:proposedmethod}

\subsection{The framework}
RCD involves three steps: (1) It extracts a set of {\it ancestors} of each variable. {\it Ancestor} is a direct or indirect cause. In this paper, $M_i$ denotes the set of ancestors of $x_i$. $M_i$ is initialized as $M_i = \emptyset$. RCD repeats the inference of causal directions between variables and updates $M$. When inferring the causal directions between observed variables, RCD removes the effect of the already identified common ancestors. Causal direction between variables $x_i$ and $x_j$ can be identified when the set of identified common causes (i.e. $M_i \cap M_j$) satisfies the back-door criterion~\cite{Pearl1993, pearl2000} to $x_i$ and $x_j$. The repetition of causal inference is stopped when $M$ no longer changes. (2) RCD extracts {\it parents} (direct causes) from $M$. When $x_j$ is an ancestor but not a parent of $x_i$, the causal effect of $x_{j}$ on $x_i$ is mediated through $M_i \setminus \{x_k \}$. RCD distinguishes direct causes from indirect causes by inferring conditional independence. (3) RCD finds the pairs of variables that are affected by the same latent confounders by extracting the pairs of variables that remain correlated but whose causal direction is not identified.

\subsection{Finding ancestors of each variable}
RCD repeats the inference of causal directions between a given number of variables to extract the ancestors of each observed variable. We introduce Lemmas 1 and 2, by which the ancestors of each variable can be identified when there is no latent confounder. Then, we extend them to Lemma 3 by which RCD extracts the ancestors of each observed variable for the case that latent confounders exist. We first quote Darmois-Skitovitch theorem (Theorem~\ref{theo:darmo}) proved in \cite{Darmois,Skitovitch} because it is used to prove the lemmas.

\begin{theo}
\label{theo:darmo}
Define two random variables $y_1$ and $y_2$ as linear combinations of independent random variables $s_{i} (i=1, \cdots, q)$: $Y_1=\sum_{i=1}^{q} \alpha_i s_i$, $Y_2=\sum_{i=1}^{q} \beta_i s_i$. Then, if $y_1$ and $y_2$ are independent, all variables $s_j$ for which $\alpha_{j}\beta_{j}\neq 0$ are Gaussian. In other words, if there exists a non-Gaussian $s_j$ for which $\alpha_j \beta_j \neq 0$, $y_1$ and
$y_2$ are dependent.
\end{theo}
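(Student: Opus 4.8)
The plan is to translate both independence assumptions into statements about characteristic functions, combine them into one functional equation, and extract from that equation the Gaussianity of the relevant $s_j$. Write $\phi_j(t)=E[e^{it s_j}]$ for the characteristic function of $s_j$. Because the $s_i$ are mutually independent, the joint characteristic function of $(Y_1,Y_2)$ factorises as
\[
E\!\left[e^{i(t_1 Y_1 + t_2 Y_2)}\right]=\prod_{j=1}^{q}\phi_j(\alpha_j t_1+\beta_j t_2),
\]
and the two marginals are $\prod_j\phi_j(\alpha_j t_1)$ and $\prod_j\phi_j(\beta_j t_2)$. The hypothesis that $Y_1$ and $Y_2$ are independent makes the joint function equal to the product of the marginals:
\[
\prod_{j=1}^{q}\phi_j(\alpha_j t_1+\beta_j t_2)=\prod_{j=1}^{q}\phi_j(\alpha_j t_1)\,\prod_{j=1}^{q}\phi_j(\beta_j t_2).
\]
Since each $\phi_j$ is continuous with $\phi_j(0)=1$, all factors stay nonzero on a neighbourhood of the origin, and there I would take continuous logarithms $\psi_j=\log\phi_j$ to obtain the additive equation
\[
\sum_{j=1}^{q}\psi_j(\alpha_j t_1+\beta_j t_2)=\sum_{j=1}^{q}\psi_j(\alpha_j t_1)+\sum_{j=1}^{q}\psi_j(\beta_j t_2).
\]

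The right-hand side is a function of $t_1$ alone plus a function of $t_2$ alone, so any operation that annihilates separable functions removes it. Applying the mixed partial derivative $\partial^2/\partial t_1\,\partial t_2$ (granting smoothness for the moment) kills the right-hand side and leaves
\[
\sum_{j=1}^{q}\alpha_j\beta_j\,\psi_j''(\alpha_j t_1+\beta_j t_2)=0 .
\]
Every index with $\alpha_j\beta_j=0$ drops out automatically, leaving exactly the terms of interest. To isolate a single surviving index $k$, I would apply, for each other surviving index $j\neq k$, the first-order operator $\beta_j\partial_{t_1}-\alpha_j\partial_{t_2}$, which annihilates any function of $\alpha_j t_1+\beta_j t_2$; the composition eliminates all terms but the $k$-th and forces a high-order derivative of $\psi_k$ to vanish. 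Hence each $\psi_k$ with $\alpha_k\beta_k\neq0$ is a polynomial near the origin.

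To force the degree down to two I would invoke the classical fact (Marcinkiewicz's theorem) that a characteristic function which coincides near the origin with the exponential of a polynomial must in fact be the exponential of a polynomial of degree at most two, so that $\phi_k(t)=\exp\!\left(i\mu_k t-\tfrac12\sigma_k^2 t^2\right)$ for some constants $\mu_k,\sigma_k$. This is precisely the characteristic function of a Gaussian, so $s_k$ is Gaussian. The last sentence of the statement is then the contrapositive: if some $s_j$ with $\alpha_j\beta_j\neq0$ is non-Gaussian, $Y_1$ and $Y_2$ cannot be independent.

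I expect the genuine difficulty to sit in two places rather than in the idea itself. First, a general characteristic function is only continuous, not differentiable, so the derivatives above should be replaced by finite differences, which need only continuity while still eliminating the separable part and isolating each term; one must also group together any indices whose coefficient pairs $(\alpha_j,\beta_j)$ are proportional, since otherwise the isolating operators collide. Second, the functional equation on its own only shows that $\psi_k$ is a polynomial of degree bounded by the number of surviving terms, and only on a neighbourhood of the origin; cutting that degree down to two is the real analytic content and requires Marcinkiewicz's theorem (equivalently, a growth estimate of Phragm\'en--Lindel\"of type for the entire extension of $\phi_k$). The remaining steps are routine bookkeeping.
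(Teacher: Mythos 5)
The paper offers no proof of this statement for you to be compared against: Theorem~\ref{theo:darmo} is the classical Darmois--Skitovitch theorem, which the authors explicitly \emph{quote} as proved in the cited works of Darmois and Skitovitch and then use as a black box in the proofs of Lemmas 1--5. Your attempt therefore has to be judged against the standard literature proof, and in outline it \emph{is} that proof: factorisation of the joint characteristic function under independence, passage to an additive functional equation for the logarithms on a neighbourhood of the origin, a finite-difference (polynomial) lemma showing that each surviving $\psi_k$ is locally a polynomial, and Marcinkiewicz's theorem to force the degree down to two.

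Two of the steps you defer are genuine gaps rather than bookkeeping. First, for indices whose coefficient pairs $(\alpha_j,\beta_j)$ are proportional, say $(\alpha_k,\beta_k)=c(\alpha_j,\beta_j)$, your grouping can only yield that the combined function $\chi(u)=\psi_j(u)+\psi_k(cu)$ is a quadratic polynomial near the origin; since $e^{\chi}$ is the characteristic function of $s_j+cs_k$, this says that the \emph{linear combination} is Gaussian, not that $s_j$ and $s_k$ individually are. The functional equation genuinely cannot separate them: replacing $\psi_j(u)$ by $\psi_j(u)+g(u)$ and $\psi_k(v)$ by $\psi_k(v)-g(v/c)$ leaves the equation unchanged, so no manipulation of it alone can do better. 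To reach the theorem's actual conclusion you must invoke Cram\'er's decomposition theorem (independent summands of a Gaussian are themselves Gaussian), which is a deep result in its own right and is indeed invoked at this point in the standard proofs (e.g., Kagan--Linnik--Rao). Second, Marcinkiewicz's theorem in its usual form requires the representation $\phi_k=\exp(P)$ on the whole real line, whereas your argument delivers it only near the origin; the bridge is the theory of analytic characteristic functions (a characteristic function analytic near $0$ extends to a horizontal strip, and the identity theorem propagates the equality $\phi_k=\exp(P)$ along the real axis). You gesture at this territory with the Phragm\'en--Lindel\"of remark, but it is a separate step from the degree bound and needs to be named. With those two additions, your sketch becomes the standard proof of the theorem that the paper cites.
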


\begin{lem}
Assume that there are variables $x_i$ and $x_j$, and their causal relation is linear, and their external influences $e_i$ and $e_j$ are non-Gaussian and mutually independent. Let $r_i^{(j)}$ denote the residual obtained by the linear regression of $x_i$ on $x_j$ and $r_j^{(i)}$ denote the residual obtained by the linear regression of $x_j$ on $x_i$. The causal relation between variables $x_i$ and $x_{j}$ is determined as follows: (1) If $x_i$ and $x_j$ are not linearly correlated, then there is no causal effect between $x_i$ and $x_j$. (2) If $x_i$ and $x_j$ are linearly correlated and $x_{j}$ is independent of residual $r_i^{(j)}$, then $x_{j}$ is an ancestor of $x_{i}$. (3) If $x_i$ and $x_j$ are linearly correlated and $x_{j}$ is dependent on $r_i^{(j)}$ and $x_{i}$ is dependent on $r_j^{(i)}$, then $x_i$ and $x_{j}$ have a common ancestor. (4) There is no case that $x_i$ and $x_j$ are linearly correlated and $x_{j}$ is independent of $r_i^{(j)}$ and $x_{i}$ is independent of $r_j^{(i)}$.
\end{lem}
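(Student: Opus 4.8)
The plan is to pass to the reduced form of the confounder-free model and then read off each of the four claims from the Darmois--Skitovitch theorem (Theorem~\ref{theo:darmo}). Since the lemma concerns the latent-confounder-free case, I would start from $\mathbf{x}=\mathbf{B}\mathbf{x}+\mathbf{e}$ with the external influences $e_m$ mutually independent and non-Gaussian; acyclicity lets me order the variables so that $\mathbf{B}$ is strictly triangular and $\mathbf{I}-\mathbf{B}$ is invertible, giving the total-effect representation $x_i=\sum_m a_{im}e_m$ and $x_j=\sum_m a_{jm}e_m$, where $a_{im}$ is the $(i,m)$ entry of $(\mathbf{I}-\mathbf{B})^{-1}$, equal to the sum over directed paths $x_m\to\cdots\to x_i$ of the products of their edge coefficients (with $a_{ii}=1$). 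In particular $a_{im}\neq 0$ forces the existence of such a path, so I can translate ``ancestor'' and ``common ancestor'' into statements about which noises appear in which variable. Writing the regression coefficient as $\rho=\mathrm{Cov}(x_i,x_j)/\mathrm{Var}(x_j)$, the residual becomes $r_i^{(j)}=\sum_m(a_{im}-\rho a_{jm})e_m$, and the theorem supplies the workhorse criterion: $x_j\indepe r_i^{(j)}$ holds iff $a_{jm}(a_{im}-\rho a_{jm})=0$ for every $m$, i.e.\ $a_{im}=\rho a_{jm}$ for every noise source $m$ that reaches $x_j$.

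Claims (2) and (4) then fall out quickly. For (2), linear correlation gives $\rho\neq 0$, and applying the criterion at $m=j$ (where $a_{jj}=1$) yields $a_{ij}=\rho\neq 0$; a nonzero total effect forces a directed path $x_j\to\cdots\to x_i$, so $x_j$ is an ancestor of $x_i$. For (4), suppose both $x_j\indepe r_i^{(j)}$ and $x_i\indepe r_j^{(i)}$ hold under correlation; the argument just used and its mirror image give simultaneously $a_{ij}\neq 0$ and $a_{ji}\neq 0$, i.e.\ directed paths in both directions, contradicting acyclicity. No genericity assumption is needed here, since the nonvanishing of the total effects is derived rather than assumed.

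The main work is claim (3), which I would prove by contraposition: assuming $x_i,x_j$ are correlated but have no common ancestor, I will show that at least one of the two independences holds, so the ``dependent on both residuals'' hypothesis cannot occur. Correlation means $\mathrm{Cov}(x_i,x_j)=\sum_m a_{im}a_{jm}\mathrm{Var}(e_m)\neq 0$, so some source $m^\ast$ has $a_{im^\ast}\neq 0$ and $a_{jm^\ast}\neq 0$; such an $m^\ast$ is an ancestor of both $x_i$ and $x_j$ whenever $m^\ast\notin\{i,j\}$, so the no-common-ancestor assumption forces $m^\ast\in\{i,j\}$, say $m^\ast=j$. The key structural observation is then that $x_j$ can have no ancestors at all: any $m\neq j$ with $a_{jm}\neq 0$ would be an ancestor of $x_j$ and hence, through $x_j$, of $x_i$ as well, i.e.\ a common ancestor. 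Thus the only source reaching $x_j$ is $e_j$ itself, so $x_j=e_j$, $\rho=a_{ij}$, and $r_i^{(j)}=\sum_{m\neq j}a_{im}e_m$ shares no noise with $x_j$; hence $x_j\indepe r_i^{(j)}$, the desired contradiction (the mirror case $m^\ast=i$ gives $x_i\indepe r_j^{(i)}$).

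Finally, claim (1) is the contrapositive of ``a causal effect implies correlation'': if $x_j$ is an ancestor of $x_i$, or $x_i$ of $x_j$, or the two share a common ancestor, then some source $m$ has $a_{im}a_{jm}\neq 0$ and contributes $a_{im}a_{jm}\mathrm{Var}(e_m)$ to $\mathrm{Cov}(x_i,x_j)$; I would invoke the standard no-cancellation (faithfulness) convention, violated only on a measure-zero set of coefficients, so that these contributions do not exactly cancel and $x_i,x_j$ are correlated. I expect claim (3) to be the main obstacle: the crux is recognizing that ``no common ancestor'' is strong enough to collapse $x_j$ to a root, which is exactly what converts the Darmois--Skitovitch coefficient criterion into an honest independence. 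Everything else reduces to bookkeeping on the entries $a_{im}$ and a single appeal to acyclicity.
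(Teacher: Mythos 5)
Your proof is correct, and it reaches the lemma by a genuinely different route than the paper, even though both rest on the same engine (Darmois--Skitovitch). The paper argues by structural case analysis: it posits four explicit two-variable models (no common cause / common cause, crossed with no effect / one causes the other), computes the regression residuals separately in each model, applies Theorem~1 to read off the correlation--independence pattern of each case, and then matches each hypothesis of the lemma to the case(s) exhibiting that pattern. You instead pass to the global reduced form $\mathbf{x}=(\mathbf{I}-\mathbf{B})^{-1}\mathbf{e}$, convert Darmois--Skitovitch into an exact coefficient criterion ($x_j\indepe r_i^{(j)}$ iff $a_{im}=\rho a_{jm}$ whenever $a_{jm}\neq 0$), prove (2) and (4) directly from the diagonal entries $a_{jj}=a_{ii}=1$ plus acyclicity, and prove (3) by contraposition via the observation that ``correlated with no common ancestor'' collapses one of the two variables to a root. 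What your route buys is uniformity and rigor: the paper's four model equations are not literally exhaustive --- for instance a chain $x_m\to x_j\to x_i$ with no other path makes $x_m$ a common ancestor of both, yet it fits neither Case~2 (which forces $x_j=e_j$) nor Case~4 (which requires $c_i$ and $c_j$ to be correlated, impossible here since $c_i=0$), and in that configuration the paper's claimed Case-4 pattern actually fails while your argument handles it without special treatment. Your route also isolates exactly where a genericity assumption is unavoidable: you invoke faithfulness (no cancellation of path or confounding contributions to $\mathrm{Cov}(x_i,x_j)$) only for claim (1), and correctly note that (2)--(4) need none since there the nonvanishing of total effects is derived rather than assumed; the paper needs the same assumption but hides it inside its case models, by positing a single nonzero $b_{ij}$ and correlated $c_i,c_j$ and then asserting correlation in Cases 2--4. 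What the paper's route buys, in exchange, is elementary bookkeeping that mirrors how the algorithm is actually run (regress, test, classify), at the cost of the looseness just described.
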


\begin{proof}
The causal relationship between two variables $x_i$ and $x_j$ can be classified into the following four cases: (Case 1) There is no common cause of the two variables, and there is no causal effect between them; (Case 2) There is no common cause of the two variables, and one variable is a cause of the other variable; (Case 3) There are common causes of the two variables, and there is no causal effect between them; (Case 4) There are common causes of the two variables, and one variable is a cause of the other variable. Cases 1, 2, 3, and 4 are modeled by Equations \ref{equation:model1-1}, \ref{equation:model1-2}, \ref{equation:model1-3}, and \ref{equation:model1-4}, respectively:
\begin{alignat}{4}
		&x_i = e_i, \ \ \ \ \ & x_j &= e_j\label{equation:model1-1}\\
		&x_i = b_{ij} x_j + e_i, \ \ \ \ \ & x_j &= e_j\label{equation:model1-2}\\
		&x_i = c_i + e_i, \ \ \ \ \ & x_j &= c_j + e_j\label{equation:model1-3}\\
		&x_i = b_{ij} x_j + c_i + e_i, \ \ \ \ \ & x_j &= c_j + e_j\label{equation:model1-4}
\end{alignat}
where $e_i$ and $e_j$ are the non-Gaussian external effects that are mutually independent, $b_{ij}$ is the non-zero causal strength from $x_j$ to $x_i$, and $c_i$ and $c_j$ are the linear combinations of the common causes of $x_i$ and $x_j$. The linear combinations of the common causes $c_i$ and $c_j$ are linearly correlated and are independent of $e_i$ and $e_j$.
We investigate the following three points for each case: (1) whether $x_i$ and $x_j$ are linearly correlated, (2) whether $x_j$ is independent of $r_i^{(j)}$, and (3) whether $x_i$ is independent of $r_j^{(i)}$.\\
\noindent
\underline{\bf Case 1:} Variables $x_i$ and $x_j$ are mutually independent because of Equation~\ref{equation:model1-1}. Therefore, $x_i$ and $x_j$ are not linearly correlated. Let $\alpha$ denote the coefficient of $x_j$ when $x_i$ is regressed on $x_j$. Since $x_i$ and $x_j$ are mutually independent, $\alpha=0$. Then,
\begin{align}
\label{equation:case2-1}
	r_i^{(j)} &= x_i - \alpha x_j\nonumber\\
	&=x_i
\end{align}
Therefore, $x_j$ is independent of $r_i^{(j)}$ because $x_i$ and $x_j$ are mutually independent. Similarly, $x_i$ is independent of $r_j^{(i)}$.\\
\noindent
\underline{\bf Case 2:} Variables $x_i$ and $x_j$ are linearly correlated because $x_i=b_{ij} x_j + e_i$. Let $\alpha$ denote the coefficient of $x_j$ when $x_i$ is regressed on $x_j$. Then, $\alpha = b_{ij}$ because $b_{ij}x_j$ is the only term on the right side of equation $x_i=b_{ij}x_j+e_i$ that covaries with $x_j$. Then, we have $r_i^{(j)}$:
\begin{align}
\label{equation:case2-2}
	r_i^{(j)} &= x_i - \alpha x_j \nonumber \\
	&=b_{ij} x_j + e_i - \alpha x_j \nonumber \\
	&=e_{i}
\end{align}
Then, $x_j$ is independent of $r_i^{(j)}$ because $x_j$ is independent of $e_i$. Let $\beta$ denote the coefficient of $x_i$ when $x_j$ is regressed on $x_i$. Since $x_i$ and $x_j$ are linearly correlated, $\beta \neq 0$. Then, we have $r_j^{(i)}$:
\begin{align}
\label{equation:case2-3}
	r_j^{(i)} &= x_j - \beta x_i \nonumber \\
	&=x_j - \beta \left(b_{ij} x_j + e_i \right)\nonumber \\
	&= \left( 1- b_{ij}\beta \right)x_j - \beta e_{i}\nonumber \\
	&= \left( 1- b_{ij}\beta \right)e_j - \beta e_{i}
\end{align}
Then, $x_i$ is not independent of $r_j^{(i)}$ because of the term $-\beta e_{i}$ in Equation~\ref{equation:case2-3} and Theorem~\ref{theo:darmo}.\\
\noindent
\underline{\bf Case 3:} Since $c_i$ and $c_j$ are linearly correlated, $x_i$ and $x_j$ are linearly correlated. Let $\alpha$ denote the coefficient of $x_j$ when $x_i$ is regressed on $x_j$. Since $x_i$ and $x_j$ are linearly correlated, $\alpha \neq 0$. Then, we have $r_i^{(j)}$:
\begin{align}
\label{equation:case2-5}
	r_i^{(j)} &= x_i - \alpha x_j \nonumber \\
	&=c_i + e_i - \alpha \left(c_j + e_j\right)\nonumber \\
	&= c_i + e_i - \alpha c_j - \alpha e_j
\end{align}
Then, $x_j$ is not independent of $r_i^{(j)}$ because of the term $-\alpha e_j$ in Equation~\ref{equation:case2-5} and Theorem~\ref{theo:darmo}. Similarly, $x_i$ is not independent of $r_j^{(i)}$.\\
\noindent
\underline{\bf Case 4:} 
Since $c_i$ and $c_j$ are linearly correlated, $x_i$ and $x_j$ are linearly correlated. Let $\alpha$ denote the coefficient of $x_j$ when $x_i$ is regressed on $x_j$. Then, $\alpha \neq b_{ij}$ because $x_j$ covaries with terms $b_{ij}x_j$ and $c_i$ on the right side of equation $x_i = b_{ij} x_j + c_i + e_i$. We have $r_i^{(j)}$:
\begin{align}
\label{equation:case2-6}
	r_i^{(j)} &= x_i - \alpha x_j \nonumber \\
	&=b_{ij} x_j + c_i + e_i- \alpha \left(c_j + e_j\right)\nonumber \\
	&= b_{ij} \left(c_j + e_j\right) + c_i + e_i- \alpha \left(c_j + e_j\right)\nonumber\\
	&= \left( b_{ij} - \alpha \right) c_j + \left( b_{ij} - \alpha \right) e_j+ c_i + e_i
\end{align}
Then, $x_j$ is not independent of $r_i^{(j)}$ because of the term $\left( b_{ij} - \alpha \right) e_j$ in Equation~\ref{equation:case2-5} and Theorem~\ref{theo:darmo}. Let $\beta$ denote the coefficient of $x_i$ when $x_j$ is regressed on $x_i$. Since $x_i$ and $x_j$ are linearly correlated, $\beta \neq 0$. Then, we have $r_j^{(i)}$:
\begin{align}
\label{equation:case2-7}
	r_j^{(i)} &= x_j - \beta x_i \nonumber \\
	&=x_j - \beta \left(b_{ij} x_j + c_i + e_i \right)\nonumber \\
	&= \left( 1- b_{ij}\beta \right)x_j - \beta c_{i} - \beta e_{i}\nonumber \\
	&= \left( 1- b_{ij}\beta \right)(c_j + e_j) - \beta c_{i} - \beta e_{i}
\end{align}
Then, $x_i$ is not independent of $r_j^{(i)}$ because of the term $-\beta e_{i}$ in Equation~\ref{equation:case2-7} and Theorem~\ref{theo:darmo}. These cases can be summarized as follows: (Case 1) $x_i$ and $x_j$ are not linearly correlated; (Case 2) $x_i$ and $x_j$ are linearly correlated, $x_j$ is independent of $r_i^{(j)}$, and $x_i$ is not independent of $r_j^{(i)}$ when the causal direction is $x_i \leftarrow x_j$; (Cases 3 and 4) $x_i$ and $x_j$ are linearly correlated, $x_j$ is not independent of $r_i^{(j)}$, and $x_i$ is not independent of $r_j^{(i)}$.
Lemma 1-(1) assumes that $x_i$ and $x_j$ are not linearly correlated. This assumption only corresponds to Case 1. Therefore, there is no causal effect between $x_i$ and $x_j$. Lemma 1-(2) assumes that $x_i$ and $x_j$ are linearly correlated, and $x_j$ is independent of $r_i^{(j)}$. This assumption only corresponds to Case 2. Therefore, $x_j$ is an ancestor of $x_i$. Lemma 1-(3) assumes that $x_i$ and $x_j$ are linearly correlated, $x_j$ is not independent of $r_i^{(j)}$, and $x_i$ is not independent of $r_j^{(i)}$. This corresponds to Case 3 or Case 4. Therefore, $x_i$ and $x_j$ have common ancestors. According to Lemma 1-(4), there is no case among Cases 1--4 where $x_i$ and $x_j$ are linearly correlated, $x_j$ is independent of $r_i^{(j)}$, and $x_i$ is independent of $r_j^{(i)}$.
\end{proof}

It is necessary to remove the effect of common causes to infer the causal directions between variables. When the set of the identified common causes of variables $x_i$ and $x_j$ satisfies the back-door criterion, the causal direction between $x_i$ and $x_j$ can be identified. The back-door criterion~\cite{Pearl1993, pearl2000} is defined as follows:

\begin{defn}
A set of variables $Z$ satisfies the back-door criterion relative to an ordered pair of variables ($x_i$, $x_j$) in a DAG $G$ if no node in $Z$ is a descendant of $x_i$, and $Z$ blocks every path between $x_i$ and $x_j$ that contains an arrow into $x_i$.
\end{defn}

Lemma 1 is generalized to Lemma 2 to incorporate the process of removing the effects of the identified common causes. Lemma 2 can also be used to determine whether the identified common causes are sufficient to detect the causal direction between the two variables.

\begin{lem}
Let $H_{ij}$ denote the set of common ancestors of $x_i$ and $x_j$. Let $y_i$ and $y_j$ denote the residuals when $x_{i}$ and $x_j$ are regressed on $H_{ij}$, respectively. Let $r_i^{(j)}$ and $r_j^{(i)}$ denote the residual obtained by the linear regression of $y_i$ on $y_j$, and $y_j$ on $y_i$, respectively. The causality and the existence of the confounders are determined by the following criteria: (1) If $y_i$ and $y_j$ are not linearly correlated, then there is no causal effect between $x_i$ and $x_j$. (2) If $y_i$ and $y_j$ are linearly correlated and $y_{j}$ is independent of the residual $r_i^{(j)}$, then $x_{j}$ is an ancestor of $x_{i}$. (3) If $y_i$ and $y_j$ are linearly correlated and $y_{j}$ is dependent on $r_i^{(j)}$ and $y_{i}$ is dependent on $r_j^{(i)}$, then $x_i$ and $x_{j}$ have a common ancestor other than $H_{ij}$, and $H_{ij}$ does not satisfy the back-door criterion to $(x_i, x_j)$ or $(x_j, x_i)$. (4) There is no case that $y_i$ and $y_j$ are linearly correlated and $y_{j}$ is independent of $r_i^{(j)}$ and $y_{i}$ is independent of $r_j^{(i)}$.
\end{lem}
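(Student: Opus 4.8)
The plan is to reduce Lemma 2 to Lemma 1 by treating the residuals $y_i$ and $y_j$ exactly as Lemma 1 treats the raw variables $x_i$ and $x_j$. Concretely, I would show that after regressing out $H_{ij}$ the pair $(y_i,y_j)$ still obeys one of the four structural models (Equations~\ref{equation:model1-1}--\ref{equation:model1-4}) used in the proof of Lemma 1, with mutually independent non-Gaussian ``private'' noises and common terms that are correlated with each other but independent of those noises. Once that structure is in place, the four conclusions of Lemma 2 follow by applying Lemma 1 to $(y_i,y_j)$ and reinterpreting its output in terms of $x_i$ and $x_j$.

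First I would pass to the reduced form of Equation~\ref{equation:lingamcoma}, ${\bf x}=({\bf I}-{\bf B})^{-1}({\bf \Lambda f}+{\bf e})$, so that every observed variable is a fixed linear combination of the independent sources $\{e_k\}$ and $\{f_k\}$; acyclicity makes $({\bf I}-{\bf B})^{-1}$ unit-triangular, so the coefficient of $e_i$ in $x_i$ equals $1$. The key observation is that $e_i$ is independent of every ancestor of $x_i$, hence independent of $H_{ij}$, so the linear regression of $x_i$ on $H_{ij}$ cannot remove any part of $e_i$: the residual $y_i$ still contains $e_i$ with coefficient $1$, and likewise $y_j$ contains $e_j$ with coefficient $1$. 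Since $e_i$ and $e_j$ are non-Gaussian and mutually independent, they play the role of the private noises in the models of Lemma 1. The remaining terms of $y_i$ and $y_j$ are linear combinations of the other sources whose $H_{ij}$-predictable part has been subtracted off; I would split these into a possible direct term proportional to $y_j$ (present when $x_j$ is an ancestor of $x_i$) and residual common terms $c_i'$ and $c_j'$ arising from sources that influence both $x_i$ and $x_j$ along paths not blocked by $H_{ij}$.

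The decisive step is to identify exactly when the residual common terms $c_i',c_j'$ vanish and to tie this to the back-door criterion. I would argue that $c_i'$ and $c_j'$ are both zero precisely when $H_{ij}$ blocks every non-causal path between $x_i$ and $x_j$, i.e. when $H_{ij}$ satisfies the back-door criterion; any common ancestor outside $H_{ij}$, or any other open back-door path, leaves a shared source in both residuals, so $c_i'$ and $c_j'$ are nonzero, correlated with each other, and (being built from sources other than $e_i,e_j$) independent of the private noises. This matches the models of Lemma 1 as follows: $y_i,y_j$ uncorrelated corresponds to Equation~\ref{equation:model1-1}; a direct term with $c_i'=c_j'=0$ corresponds to Equation~\ref{equation:model1-2}; and nonzero $c_i',c_j'$ corresponds to Equation~\ref{equation:model1-3} or \ref{equation:model1-4}. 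Applying Lemma 1 to $(y_i,y_j)$ then yields statements (1)--(3) verbatim, with ``common ancestor'' now meaning a common ancestor other than $H_{ij}$ and with the Case~3/4 situation being exactly the failure of the back-door criterion for both $(x_i,x_j)$ and $(x_j,x_i)$; the dependence of the cross-residuals in that situation is inherited, through Lemma 1, from Theorem~\ref{theo:darmo}, and statement (4)'s impossibility is inherited directly from Lemma 1-(4).

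The hard part will be this third step: rigorously characterising the residuals of a multivariate regression on the set $H_{ij}$ and proving the clean equivalence between ``residual confounding survives'' and ``the back-door criterion fails,'' while simultaneously verifying that the surviving common terms remain mutually correlated and independent of $e_i$ and $e_j$ so that the Lemma 1 models apply without modification. Care is also needed to confirm that when $x_j$ is an ancestor of $x_i$ and $H_{ij}$ does satisfy the back-door criterion, the regression of $y_i$ on $y_j$ leaves a residual equal to $x_i$'s private noise, so that $y_j \indepe r_i^{(j)}$, which is precisely what licenses conclusion (2).
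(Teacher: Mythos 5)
Your proposal is correct and takes essentially the same approach as the paper: the paper's entire proof of Lemma 2 is the single sentence ``When Lemma 1 is applied to $y_i$ and $y_j$, Lemma 2 is derived.'' Your write-up is in fact a more careful version of that same reduction, since you additionally verify the hypotheses of Lemma 1 for the residual pair (the private noises $e_i,e_j$ survive regression on $H_{ij}$ with coefficient $1$ because they are independent of $H_{ij}$, and the surviving common terms correspond to back-door paths not blocked by $H_{ij}$), details the paper leaves implicit.
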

\begin{proof}
	When Lemma 1 is applied to $y_i$ and $y_j$, Lemma 2 is derived.
\end{proof}

\begin{figure}[t]
\centering
\includegraphics[width=6.5cm]{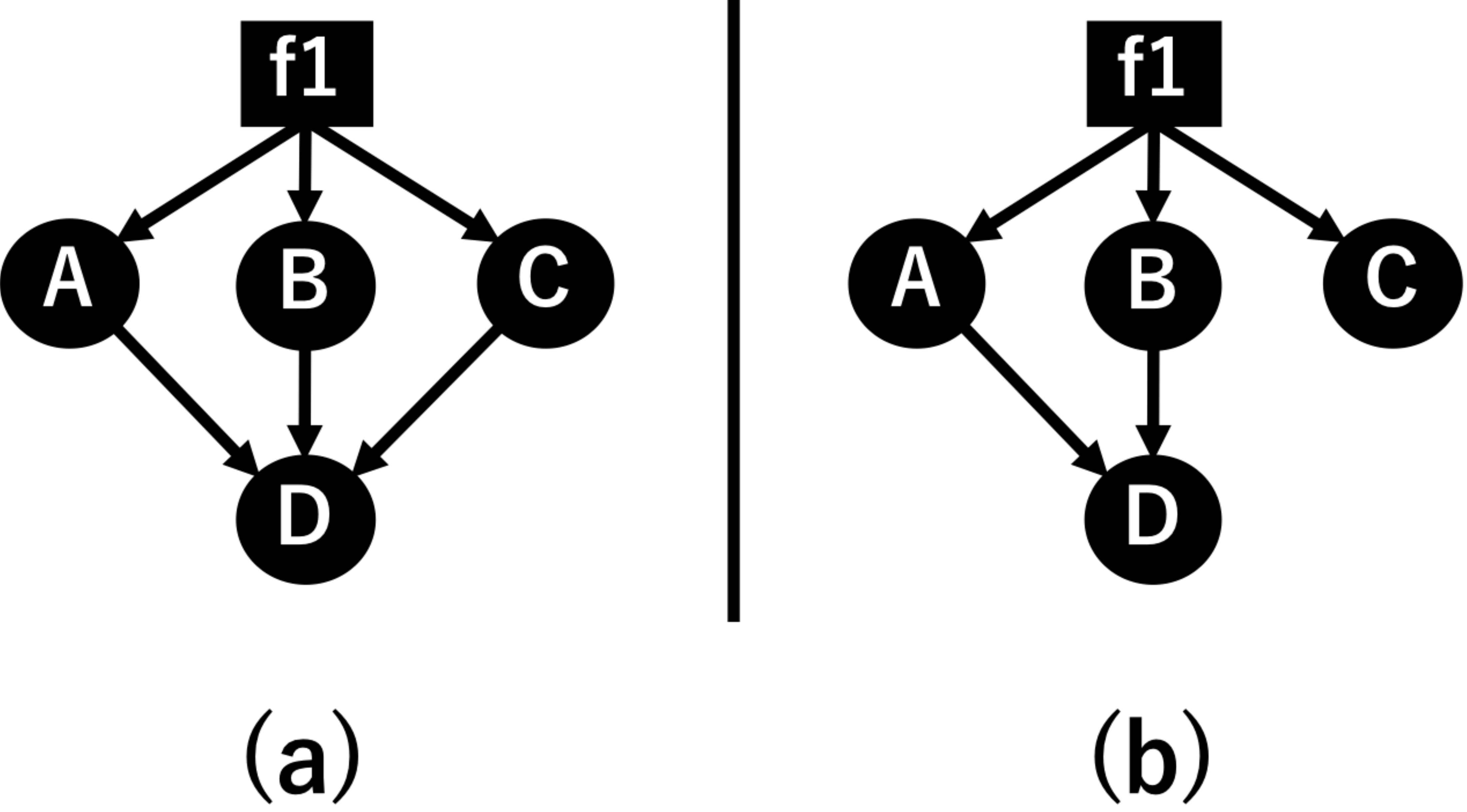}
\caption{(a) Variables $A$, $B$, and $C$ are the causes of variable $D$, and they have a common cause, $f_1$. (b) $A$ and $B$ are the causes of $D$,  but $C$ is not.}
\label{figure:confounder}
\end{figure}

Next, we consider the case that there are latent confounders. In Lemma 2, the direction between two variables is inferred by regression and independence tests. However, if there are two paths from latent confounder $f_k$ to $x_i$, and $x_j$ is only on one of the paths, then $M_i \cap M_j$ cannot satisfy the back-door criterion. For example, in Figure~\ref{figure:confounder}-(a), variables A, B, and C are the causes of variable D, and the causes are also affected by the same latent confounder $f_1$. The causal direction between $A$ and $D$ cannot be inferred only by inferring the causality between them because the effect of $f_1$ is mediated through $B$ and $C$ to $D$. Therefore, $A$, $B$, and $C$ are the causes of $D$ when they are independent of the residual obtained by the multiple regression of $D$ on $\{ A, B, C \}$. However, it is necessary to confirm that variables in each proper subset of $\{A, B, C\}$ are not independent of the residual obtained by the regression of $D$ on the proper subset (i.e., no proper subset of $\{A, B, C\}$ satisfies the back-door criterion). For example, in Figure~\ref{figure:confounder}-(b), $C$ is not a cause of $D$, but $A$, $B$, and $C$ are all independent of the residual obtained by the multiple regression of $D$ on $\{ A, B, C \}$. $C$ should not be regarded as a cause of $D$ because $A$ and $B$ are also independent of the residual when $D$ is regressed on $\{A, B\}$. This example is generalized and formulated by Lemma 3:

\begin{lem}
Let $X$ denote the set of all observed variables. Let $U$ denote a subset of $X$ that contains $x_i$ (i.e., $U \subseteq X$ and $x_i \in U$). Let $M$ denote the sequence of $M_j$ where $M_j$ is a set of ancestors of $x_j$. For each $x_j\in U$, let $y_j$ denote the residual obtained by the multiple linear regression of $x_j$ on the common ancestors of $U$, where the set of common ancestors of $U$ is $\bigcap_{x_j\in U}M_j$. We define $f(x_i, U, M)$ as a function that returns 1 when each $y_j\in\{y_j \mid x_j \in U\setminus x_i \}$ is independent of the residual obtained by the multiple linear regression of $y_i$ on $\{y_j \mid j \neq i\}$; otherwise it returns 0. If $f(x_i, V, M)=0$ for each $V \subset U$ and $f(x_i, U, M)=1$, then each $x_j \in U$ is an ancestor of $x_j$.
\end{lem}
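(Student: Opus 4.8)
The plan is to lift the two-variable analysis of Lemma 2 to the multiple regression that defines $f(x_i,U,M)$, and then to use the minimality hypothesis ($f(x_i,V,M)=0$ for every $V\subset U$) to promote a statement about the set $U\setminus\{x_i\}$ as a whole into a statement about each of its members. Throughout I write $C_U=\bigcap_{x_j\in U}M_j$ for the common ancestors that are regressed out to form the residuals $y_j$, and I let $r$ denote the residual of the multiple regression of $y_i$ on $\{y_j\mid x_j\in U\setminus\{x_i\}\}$, so that $f(x_i,U,M)=1$ holds exactly when $y_j\indepe r$ for every $x_j\in U\setminus\{x_i\}$. As in the explicit residual computations of the proof of Lemma 1, I would first write each $y_j$ and the residual $r$ as linear combinations of the mutually independent latent sources of the model -- the external effects $e_k$ and the confounders $f_m$ -- which is legitimate because both regressing on $C_U$ and regressing on the remaining $y_j$ are linear operations. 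I can then apply the Darmois--Skitovitch theorem (Theorem~\ref{theo:darmo}) source by source: $y_j\indepe r$ forces that no non-Gaussian source occurs with a nonzero coefficient in both $y_j$ and $r$.

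From $f(x_i,U,M)=1$ I would extract two structural consequences, giving the multivariate analogue of Lemma 2-(2). First, no member of $U\setminus\{x_i\}$ is a descendant of $x_i$: regressing $y_i$ in a descendant direction leaves a residual that stays dependent on that descendant, exactly as the residual $r_j^{(i)}$ in Case 2 of the proof of Lemma 1 retains the term $-\beta e_i$; such a descendant $x_j$ would therefore give $y_j\notindepe r$, contradicting $f=1$. Second, the variables in $(U\setminus\{x_i\})\cup C_U$ collectively block every back-door path into $x_i$: any confounder $f_m$ or upstream source that still fed $x_i$ along a path intercepted by none of these variables would leave a shared non-Gaussian source in both $r$ and some $y_j$, again violating an independence. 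Together these show that $U\setminus\{x_i\}$, augmented by $C_U$, is a back-door--blocking set consisting of non-descendants of $x_i$.

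These two facts do not yet force \emph{each} individual $x_j$ to be an ancestor of $x_i$; a variable might be present only because it helps block confounding while lying on no directed path into $x_i$, which is precisely the situation of Figure~\ref{figure:confounder}-(b). Here I would invoke minimality. Suppose some $x_j\in U\setminus\{x_i\}$ is not an ancestor of $x_i$. I would argue that the smaller set $V=U\setminus\{x_j\}$ still satisfies $f(x_i,V,M)=1$: dropping a non-ancestor opens no back-door path into $x_i$ that the remaining variables -- now regressed together with the possibly larger common-ancestor set $C_V\supseteq C_U$ -- fail to block, so by the same Darmois--Skitovitch bookkeeping the residual of $y_i$ on $\{y_k\mid x_k\in V\setminus\{x_i\}\}$ is still independent of each regressor. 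This contradicts the hypothesis that $f(x_i,V,M)=0$ for every proper subset, and hence every $x_j\in U\setminus\{x_i\}$ must be an ancestor of $x_i$, which is the claim.

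I expect the main obstacle to be exactly this last step. The subtlety is that the residuals are defined relative to the common-ancestor set $\bigcap_{x_k}M_k$, and this set can \emph{grow} when a variable is removed, so I cannot simply reuse the source decomposition computed for $U$ when passing to $V$. I must re-derive the reduced-form expressions for the residuals of the subset and verify that removing a genuine non-ancestor neither injects an uncaptured confounder into the new residual nor leaves a non-descendant source shared between a regressor and that residual. Making the informal phrase ``dropping a non-ancestor opens no back-door path'' precise in terms of which $e_k$ and $f_m$ can appear in the recomputed residual, and checking it against the back-door criterion, is the technical heart of the argument.
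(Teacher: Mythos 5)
Your overall framing---proof by contradiction, using the minimality hypothesis to rule out a non-ancestor $x_j \in U\setminus\{x_i\}$---matches the paper, but the step that actually does the work is missing, and you concede as much when you call the passage from $U$ to $V=U\setminus\{x_j\}$ ``the technical heart'' still to be made precise. The paper closes exactly this gap with one direct application of Theorem~\ref{theo:darmo}, with no back-door bookkeeping at all: write $s_i^U = y_i - \beta_j y_j - \sum_{x_k\in U\setminus\{x_i,x_j\}}\beta_k y_k$ and expand $y_i$ in reduced form; if $x_j$ is not an ancestor of $x_i$, then the external effect $e_j$ occurs in this expression \emph{only} inside the term $-\beta_j y_j$ (the reduced form of $x_i$ involves no descendant of $x_j$, and the common ancestors $H_U$ are ancestors of $x_j$, hence not its descendants). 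Since $y_j$ carries $e_j$ with coefficient $1$ and $y_j \indepe s_i^U$ by the hypothesis $f(x_i,U,M)=1$, Theorem~\ref{theo:darmo} forces $\beta_j=0$. Consequently $s_i^U$ coincides with the residual of the regression of $y_i$ on $\{y_k \mid x_k\in U\setminus\{x_i,x_j\}\}$, so the assumed independences immediately yield $f(x_i,U\setminus\{x_j\},M)=1$, contradicting $f(x_i,V,M)=0$ for all $V\subset U$. The decisive observation is that the regression coefficient of a non-ancestor must vanish; once you have that, dropping $x_j$ changes nothing, and there is no need to re-derive any source decomposition for the subset.

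Your proposed substitute for this step is also shaky on its own terms. The two ``structural consequences'' you want to extract from $f(x_i,U,M)=1$ (no regressor is a descendant of $x_i$; the regressors together with $C_U$ block every back-door path) presuppose a correspondence between residual independence and graphical blocking that the paper never establishes and that is genuinely delicate here: Figure~\ref{figure:confounder}-(b) shows that $f=1$ can hold while $U$ contains a non-ancestor, so $f=1$ does not certify the kind of graphical statement you would need, and converting ``blocking is preserved'' back into independence of a \emph{re-estimated} residual is itself a Darmois--Skitovitch computation, not a citation of the back-door criterion. One point in your favor: you correctly notice that $\bigcap_{x_k\in V}M_k$ can strictly grow when $x_j$ is removed, so the residuals defining $f(x_i,V,M)$ are not literally those computed for $U$; the paper's own proof silently elides this (it concludes $f(x_i,U\setminus\{x_j\},M)=1$ from independences among the $U$-based residuals). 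But spotting that obstacle is not the same as resolving it, and the coefficient-killing argument above is the missing idea your plan needs.
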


\begin{proof}
	We prove Lemma 3 by contradiction. Assume that $x_j \in U \setminus \{x_i\}$ is not an ancestor of $x_i$, even though $f(x_i, V, M)=0$ for each $V \subset U$, and $f(x_i, U, M)=1$. Let $D_j$ denote the set that consists of the descendants of $x_j$ and $x_j$ itself. Then,
\begin{equation*}
\label{equation:a3-1}
	x_{i} = \sum_{x_m\notin D_j} b_{im}x_{m} + \sum_n \lambda_{in}f_{n}+e_{i}
\end{equation*}
Let $H_U$ denote the set of common causes of $U$ (i.e. $H_U = \bigcap_{x_j\in U}M_j$). Let $\alpha_k$ denote the coefficient of $x_k\in H_U$ when $x_i$ is regressed on $H_U$. Then, 
\begin{equation*}
\label{equation:a3-2}
	y_i = x_i -\sum_{x_k\in H_{U}} \alpha_k x_{k}
\end{equation*}
Let $s_i^{U}$ denote the residual obtained by the multiple regression of $y_i$ on $\{y_j \mid x_j \in U\setminus x_i \}$, and let $\beta_k$ denote the coefficient of $y_k$ obtained by the multiple regression of $y_i$ on $y_k\in\{y_k \mid x_k \in U\setminus \{x_i\} \}$. Then, we have $s_i^U$:
\begin{align}
\label{equation:a3-3}
	s_i^U &= y_i - \sum_{x_k \in U\setminus \{x_i\}}\beta_k y_k \nonumber\\
	&= y_i - \beta_j y_j - \sum_{x_k \in U\setminus \{x_i, x_j\}}\beta_k y_k\nonumber\\
	&=x_i -\sum_{x_k\in H_{U}} \alpha_k x_{k}- \beta_j y_j - \sum_{x_k \in U\setminus \{x_i, x_j\}}\beta_k y_k\nonumber \\
	&=\sum_{x_m\notin D_j} b_{im}x_{m} + \sum_n \lambda_{in}f_{n}+e_{i}-\sum_{x_k\in H_{U}} \alpha_k x_{k}- \beta_j y_j - \sum_{x_k \in U\setminus \{x_i, x_j\}}\beta_k y_k
\end{align}
There is no term that includes $e_j$, the external effect of $y_j$, other than $-\beta_j y_j$ in Equation~\ref{equation:a3-3}. External effect $e_j$ is independent of the other terms in Equation~\ref{equation:a3-3}. Since $y_j$ is independent of $s_i^U$, $\beta_j=0$ by Theorem~\ref{theo:darmo}. Therefore, we have $s_i^U$ as follows:
\begin{align}
\label{equation:a3-4}
	s_i^U &= y_i - \sum_{x_k \in U\setminus \{x_i,x_j\}}\beta_k y_k 
\end{align}
Every $y_k \in U\setminus\{x_i,x_j\}$ is independent of $s_i^U$. This means $f(x_i, U\setminus \{x_j\}, M)=1$, and it contradicts the assumption; that is, $f(x_i, V, M)=0$ for each $V \subset U$.
\end{proof}

We describe the procedure and the implementation of how RCD extracts the ancestors of each observed variable in Algorithm~\ref{algo:ancestoralgorithm}. The output of the algorithm is sequence $M=\{M_i \}$, where $M_i$ is the set of identified ancestors of $x_i$. Argument $\alpha_C$ is the alpha level for the p-value of the Pearson's correlation. If the p-value of two variables is smaller than $\alpha_{\text{C}}$, then we estimate that the variables are linearly correlated. Argument $\alpha_{\text{I}}$ is the alpha level for the p-value of the Hilbert-Schmidt independence criterion (HSIC)~\cite{NIPS2007_3201}. If the p-value of the HSIC of two variables is greater than $\alpha_{\text{I}}$, then we estimate that the variables are mutually independent. Argument $\alpha_S$ is the alpha level to test whether a variable is generated from a non-Gaussian process using the Shapiro-Wilk test~\cite{shapiro1965}. Argument $n$ is the maximum number of explanatory variables used in multiple linear regression for identifying causal directions; i.e., the maximum number of $(|U|-1)$ in Lemma 3. In practice, this should be set to a small number when the number of samples is smaller than the number of variables. RCD does not perform multiple regression analysis of more than $n$ explanatory variables.\par
RCD initializes $M_i$ to be an empty set for each $x_i \in X$. RCD repeats the inference between the variables in each $U \subset X$ that has $(l+1)$ elements. Number $l$ is initialized to $1$. If there is no change in $M$, $l$ is increased by 1. If there is a change in $M$, $l$ is set to $1$. When $l$ exceeds $n$, the repetition ends. Variable $changed$ has information about whether there is a change in $M$ within an iteration.\par
In line 16 of Algorithm~\ref{algo:ancestoralgorithm}, RCD confirms that there is no identified ancestor of $x_i$ in $U$ by checking that $M_i\cap U=\emptyset$. This confirms that $f(x_i, V, M)=0$ for each $V \subset U$ in Lemma 3. In lines 17--24, RCD checks whether $f(x_i, U, M)=1$ in Lemma 3. When $f(x_i, U, M)=1$ is satisfied, $x_i$ is put into $S$. $S$ is a set of candidates for a {\it sink} (a variable that is not a cause of the others) in $U$. It is necessary to test whether there is only one {\it sink} in $U$ because two variables may be misinterpreted as causes of each other when the alpha level for the independence test ($\alpha_{\text I}$) is too small.\par
We use least squares regression for removing the effect of common causes in line 12 of Algorithm~\ref{algo:ancestoralgorithm}, but we use a variant of multiple linear regression called multilinear HSIC regression (MLHSICR) to examine the causal directions between variables in $U$ in line 20 of Algorithm~\ref{algo:ancestoralgorithm} when $l \geq 2$. Coefficients obtained by multiple linear regression using the ordinary least squares method with linearly correlated explanatory variables often differ from true values due to estimation errors. Thus, the relationship between the explanatory variables and the residual may be misinterpreted to be dependent in the case that explanatory variables are affected by the same latent confounders. To avoid such failure, we use MLHSICR defined as follows:

\begin{defn}
Let variable $x_i$ denote an explanatory variable, ${\bf x}$ denote a vector that collects explanatory variables $x_i$, and $y$ denote a response variable. MLHSICR models the relationship $y={\boldsymbol \lambda}^{\top}{\bf x}$ by the coefficient vector $\boldsymbol{\lambda}$ in the following equation:
\begin{equation}
\label{eq:hsic}
	{\boldsymbol \lambda} = \argmin_{\boldsymbol \lambda}\sum_i \reallywidehat{\text{HSIC}}(x_i,y - {\boldsymbol \lambda}^{\top}{\bf x})
\end{equation}
where ${\reallywidehat{\text{HSIC}}}(a, b)$ denotes the Hilbert-Schmidt independence criterion of $a$ and $b$.
\end{defn}

Mooij et al.~\cite{Mooij:2009} have developed a method to estimate the nonlinear causal function between variables by minimizing the HSIC between the explanatory variables and the residual. RCD estimates ${\boldsymbol \lambda}$ by minimizing the sum of the HSICs in Equation~\ref{eq:hsic} using the L-BFGS method~\cite{Liu1989}, similar to Mooij et al.~\cite{Mooij:2009}. L-BFGS is a quasi-Newton method, and RCD sets the coefficients obtained by the least squares method to the initial value of ${\boldsymbol \lambda}$.

\begin{algorithm}
\scriptsize
\SetKwProg{init}{initialization}{}{}
\DontPrintSemicolon
\KwIn{$X$: the set of observed variables, $\alpha_{\text{C}}$: the alpha level for Pearson's correlation, $\alpha_{\text{I}}$: the alpha level for independence test, $\alpha_{\text{S}}$: the alpha level for Shapiro-Wilk test, $n$: the maximum number of explanatory variables}
\KwOut{$M$: the sequence $\{ M_i\}$ where $M_{i}$ is a set of ancestors of $x_i$.}
\SetKwBlock{Begin}{function}{end function}
\Begin($\text{extractAncestors} {(} X, \alpha_{\text{C}}, \alpha_{\text{I}}, \alpha_{\text{S}}, n {)}$)
{     
    \init{}{
		\ForEach{$i$}{
			$M_i \leftarrow \emptyset$\;
		}
		$l \leftarrow 1$\;
	}
	\While{$l \leq n$}{
	$changed \leftarrow$ FALSE\;
		\ForEach{$U \subseteq X; (|U|=l+1)$}{
			$H_U \leftarrow \bigcap_{x_j\in U}M_j$\;
			$S \leftarrow \emptyset$\;
			\ForEach{$x_j \in U$}{
				$y_j \leftarrow$ the residual obtained by regression of $x_j$ on $H_U$\;
				$t_j  \leftarrow$ the p-value of Shapiro-Wilk test of $y_j$\;
			}
			\If{$\forall t_{k} < \alpha_{\text{S}}$}{
			\ForEach{$x_i \in U$}{
				\If{$M_i \cap U = \emptyset$}{
					\ForEach{$x_{j} \in U \setminus \{x_i\}$}{
						$c_{ij}  \leftarrow$ the p-value of linear correlation between $y_i $ and $y_{j}$\;
					}
					\If{$\forall c_{ij} < \alpha_{\text{C}}$}{
						$s_i^{U} \leftarrow$ the residual obtained by regression of $y_i$ on $\{y_j | x_j \in U\setminus \{x_i\} \}$\;
						\ForEach{$x_{j} \in U \setminus \{x_i\}$}{
							$h_{ij}  \leftarrow$ the p-value of the HSIC between $s_i^{U}$ and $y_{j}$\;
						}
						\If{$\forall h_{ij} > \alpha_{\text{I}}$}{
							$S \leftarrow S\cup \{x_i\}$\;
						}
					}
				}
			}
			\If{$|S|=1$}{
				\ForEach{$x_{i} \in S$}{
					$M_{i} \leftarrow M_{i} \cup (U\setminus \{x_i \})$\;
				}
				$changed \leftarrow$ TRUE\;
			}
		}
	}
	\uIf{$changed=\rm{TRUE}$}{
		$l \leftarrow 1$\;
	}\Else{
		$l \leftarrow l + 1$\;
	}
	}
  \Return{$M$}
}
\caption{Extract ancestors of each variable}\label{algo:ancestoralgorithm}
\end{algorithm}

\subsection{Finding parents of each variable}

When $x_j$ is an ancestor but not a parent of $x_i$, the effect of $x_j$ on $x_i$ is mediated through $M_i \setminus \{ x_j \}$. Therefore, $x_j \indepe x_i \mid M_i \setminus \{ x_j \}$. \cite{zhang2017} proposed a method to test the conditional independence using unconditional independence testing in Theorem~\ref{theo:zhang} (proved by them):
\begin{theo}
\label{theo:zhang}
If $x_i$ and $x_j$ are neither directly connected nor unconditionally independent, then there must exist a set of variables $Z$ and two functions $f$ and $g$ such that $x_i-f(Z)\indepe x_j-g(Z)$, and $x_i-f(Z)\indepe Z$ or $x_j-g(Z)\indepe Z$.
\end{theo}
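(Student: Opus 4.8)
The plan is to work inside the structural causal model that generates the data and to exploit the causal Markov and faithfulness assumptions together with the additive-noise structure. Since $x_i$ and $x_j$ are not directly connected, they are non-adjacent in the underlying DAG, so there is a set $Z$ of the remaining variables that d-separates them; being marginally dependent (not unconditionally independent) only tells us that the empty set is \emph{not} such a separator, so a nonempty $Z$ must be used. Concretely, I would order the pair so that $x_j$ is a non-descendant of $x_i$ (in a DAG at least one of the two is not a descendant of the other, since a mutual descendant relation would be a cycle) and take $Z$ to be a separating set lying ``upstream'' of $x_i$, e.g. a subset of the ancestors or direct causes of $x_i$ that blocks every path from $x_j$ into $x_i$. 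By the Markov property this gives $x_i \indepe x_j \mid Z$.

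Next I would define the two functions as conditional expectations, $f(Z)=E[x_i\mid Z]$ and $g(Z)=E[x_j\mid Z]$, so that the residuals $x_i-f(Z)$ and $x_j-g(Z)$ are the parts of $x_i$ and $x_j$ left unexplained by $Z$. For the additive-noise model assumed here these residuals are deterministic functions of $(x_i,Z)$ and $(x_j,Z)$ respectively; because $Z$ d-separates $x_i$ from $x_j$, the two residuals are independent conditionally on $Z$. The key step is then to upgrade this to \emph{unconditional} independence $x_i-f(Z)\indepe x_j-g(Z)$. I would do this by writing each residual as a linear combination of the underlying independent, non-Gaussian exogenous sources $\{e_k\}$ and $\{f_k\}$ of the model; d-separation by $Z$ forces the two residuals to be built from disjoint sets of sources, and since the sources are mutually independent the residuals are independent. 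In the linear non-Gaussian case this last implication is exactly what Theorem~\ref{theo:darmo} guarantees.

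Finally I would verify the disjunction $x_i-f(Z)\indepe Z$ or $x_j-g(Z)\indepe Z$. Having chosen $Z$ upstream of $x_i$, the residual $x_i-f(Z)$ reduces to the exogenous contribution entering $x_i$ once $Z$ has been accounted for; this contribution is independent of all the sources generating $x_i$'s ancestors, and in particular of $Z$ itself, giving $x_i-f(Z)\indepe Z$. If the roles are reversed, so that $x_i$ is a non-descendant of $x_j$, the symmetric choice of $Z$ yields $x_j-g(Z)\indepe Z$, and hence one of the two independences always holds.

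The main obstacle I anticipate is coordinating two competing requirements on the single set $Z$: it must d-separate the pair, yet it must also be chosen so that the downstream residual's sources are disjoint from---and hence independent of---$Z$. This is immediate when a separating set exists entirely among the direct causes of the downstream variable, but delicate when the only separators involve colliders or descendants, since conditioning on a collider can re-open a path and destroy either the residual independence or the independence-from-$Z$ property. Establishing that such an ``upstream'' separator can always be selected, and that passing from conditional to unconditional residual independence is legitimate under the non-Gaussian additive-noise structure, is the crux of the argument; the regression step and the appeal to Theorem~\ref{theo:darmo} are then routine.
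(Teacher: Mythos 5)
The paper does not actually prove this statement: Theorem~\ref{theo:zhang} is imported from \cite{zhang2017} with the proof explicitly deferred to that reference (``proved by them''), so there is no in-paper argument to compare yours against, and a self-contained proof is genuinely added value. Your argument is essentially sound for a linear acyclic model with mutually independent exogenous terms, and it can be tightened as follows: order the pair so that $x_j$ is a non-descendant of $x_i$ (always possible by acyclicity) and take $Z=\mathrm{Pa}(x_i)$; the local Markov property then guarantees $x_i \indepe x_j \mid Z$ for any non-adjacent non-descendant, so the ``delicate'' selection of an upstream separator that you flag as the crux of the argument is in fact automatic---a separator involving colliders or descendants never needs to be considered. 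With this $Z$ and $f$ the linear regression of $x_i$ on $Z$, the residual is exactly $x_i-f(Z)=e_i$, while $x_j$, $Z$, and hence $x_j-g(Z)$ are functions of exogenous terms other than $e_i$; both required independences, $x_i-f(Z)\indepe x_j-g(Z)$ and $x_i-f(Z)\indepe Z$, then follow at once from the mutual independence of the sources. Two corrections are worth making. First, the implication you attribute to Theorem~\ref{theo:darmo}---that residuals built from disjoint sets of independent sources are independent---is elementary probability, not Darmois--Skitovitch; that theorem gives the converse direction (a shared non-Gaussian source with nonzero coefficients forces dependence), which is what the rest of the paper uses but which is not needed here. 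In fact your construction never uses non-Gaussianity at all, so the existence claim holds for any linear acyclic SEM with independent noises. Second, you should drop the framing of ``upgrading'' conditional independence of the residuals given $Z$ to unconditional independence: that inference is invalid in general (three pairwise/conditional independences do not compose this way), and your proof correctly never relies on it---the source decomposition does all the work.
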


In our case, $x_j \indepe x_i \mid (M_i \setminus \{ x_j \}) \Leftrightarrow x_j - f(M_i \setminus \{ x_j \}) \indepe x_i - g(M_i \setminus \{ x_j \})$, where $f$ and $g$ are multiple linear regression functions of $x_j$ on $M_i \setminus \{ x_j \}$ and $x_i$ on $M_i \setminus \{ x_j \}$, respectively. Since $(M_i \setminus \{ x_j \})\cap M_j= M_i \cap M_j$, we can assume that $x_j \indepe x_i \mid (M_i \setminus \{ x_j \}) \Leftrightarrow x_j - h(M_i \cap M_j) \indepe x_i - g(M_i \setminus \{ x_j \})$ where $h$ is a multiple linear regression function of $x_j$ on $(M_i \cap M_j)$.

Based on Theorem~\ref{theo:zhang}, RCD uses Lemma 4 to distinguish the parents from the ancestors. We proved Lemma 4 without using Theorem~\ref{theo:zhang}.

\begin{lem}
 Assume that $x_j \in M_i$; that is, $x_j$ is an ancestor of $x_i$. Let $z_i$ denote the residual obtained by the multiple regression of $x_i$ on $M_i \setminus \{ x_j \}$. Let $w_{j}$ denote the residual obtained by the multiple regression of $x_j$ on $(M_i \cap M_j)$. If $z_{i}$ and $w_{j}$ are linearly correlated, then $x_j$ is a parent of $x_i$; otherwise, $x_j$ is not a parent of $x_i$.
 \end{lem}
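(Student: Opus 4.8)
The plan is to reduce the claim to a single covariance computation and then argue that, once the remaining ancestors are regressed out, the only association that can survive between $x_i$ and $x_j$ is the one carried by a direct edge $x_j\to x_i$. First I would exploit that $x_j\in M_i$ forces $M_j\subseteq M_i$, so that $M_i\cap M_j=M_j\subseteq M_i\setminus\{x_j\}$. Since $z_i$ is by construction uncorrelated with every regressor in $M_i\setminus\{x_j\}$, and $w_j$ differs from $x_j$ only by a linear combination of variables in $M_i\cap M_j$, we obtain $\mathrm{Cov}(z_i,w_j)=\mathrm{Cov}(z_i,x_j)$. Hence the linear correlation of $z_i$ and $w_j$ vanishes iff $\mathrm{Cov}(z_i,x_j)=0$, and it suffices to analyze the latter.

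Next I would introduce the orthogonal projection $P$ onto the span of $\{x_k : x_k\in M_i\setminus\{x_j\}\}$ and write $w_j'=x_j-Px_j$, so that $z_i=(\mathrm{Id}-P)x_i$. Substituting the structural equation $x_i=b_{ij}x_j+\sum_{p}b_{ip}x_p+\sum_k\lambda_{ik}f_k+e_i$, where the sum over $p$ ranges over the parents of $x_i$ other than $x_j$, and applying $\mathrm{Id}-P$, every parent term $x_p$ is annihilated because those parents lie in $M_i\setminus\{x_j\}$ and satisfy $Px_p=x_p$; moreover $e_i$ is uncorrelated with all ancestors of $x_i$, so $Pe_i=0$. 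This yields
\begin{equation*}
z_i=b_{ij}\,w_j'+\sum_k\lambda_{ik}\,(f_k-Pf_k)+e_i .
\end{equation*}
Taking the covariance with $x_j$ and using $\mathrm{Cov}(e_i,x_j)=0$ (as $x_j$ is an ancestor, not a descendant, of $x_i$) gives $\mathrm{Cov}(z_i,x_j)=b_{ij}\,\mathrm{Var}(w_j')+\sum_k\lambda_{ik}\,\mathrm{Cov}(f_k-Pf_k,\,w_j')$.

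The hard part is showing that the confounder sum vanishes. Here I would use that $x_j$ has been identified as a genuine ancestor of $x_i$ with no shared latent confounder (i.e.\ $c_{ij}=0$): for any $f_k$ acting directly on $x_i$ ($\lambda_{ik}\neq0$), the only way it can reach $x_j$ is through observed ancestors of $x_j$, and every such mediator lies in $M_j\subseteq M_i\setminus\{x_j\}$. Phrased via the back-door criterion, regressing out $M_i\setminus\{x_j\}$ blocks every back-door path between $x_i$ and $x_j$, so that $\mathrm{Cov}(f_k-Pf_k,w_j')=0$ for each relevant $k$ and the whole sum is $0$. This step is the crux, because it is exactly where acyclicity, the absence of a common confounder, and the fact that all mediators are themselves ancestors of $x_i$ must be combined; any unblocked back-door path would spoil the equivalence.

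Finally, with the confounder sum eliminated, $\mathrm{Cov}(z_i,w_j)=\mathrm{Cov}(z_i,x_j)=b_{ij}\,\mathrm{Var}(w_j')$. Because each variable carries its own non-degenerate independent noise, $x_j$ is not a deterministic linear function of $M_i\setminus\{x_j\}$, so $\mathrm{Var}(w_j')>0$. Therefore $\mathrm{Cov}(z_i,w_j)\neq0$ precisely when $b_{ij}\neq0$, that is, when $x_j$ is a parent of $x_i$; otherwise $z_i$ and $w_j$ are uncorrelated. Since a zero linear correlation is equivalent to a zero covariance, this is exactly the statement of the lemma.
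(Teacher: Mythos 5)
Your proposal follows the same skeleton as the paper's own proof: both aim at an identity of the form $z_i = b_{ij}\cdot(\text{residualized } x_j) + (\text{terms claimed uncorrelated with } w_j)$, from which correlation of $z_i$ and $w_j$ holds precisely when $b_{ij}\neq 0$. The technique differs, though. The paper substitutes the structural equations into the definitions of $z_i$ and $w_j$ and asserts that the population least-squares coefficients coincide with the structural coefficients, so that all terms over $M_i\setminus\{x_j\}$ cancel symbolically (and it never invokes transitivity of ancestry, carrying $M_j\setminus M_i$ terms along). You instead work in the covariance Hilbert space with the projection $P$ onto the span of $M_i\setminus\{x_j\}$: the reduction $\mathrm{Cov}(z_i,w_j)=\mathrm{Cov}(z_i,x_j)$, the identity $z_i=b_{ij}w_j'+\sum_k\lambda_{ik}(f_k-Pf_k)+e_i$, and the observation $\mathrm{Var}(w_j')>0$ (a detail the paper skips) are clean and correct. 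Your route buys explicitness: it isolates exactly what must be proved, namely $\mathrm{Cov}(f_k-Pf_k,w_j')=0$ for every $f_k$ with $\lambda_{ik}\neq 0$, whereas the paper hides this same requirement inside its unproved claim that the regression coefficients equal the structural ones (that claim is valid only if the asserted residual is orthogonal to every regressor, which is the identical statement).

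The one genuine weakness is your justification of that crux. ``Regressing out $M_i\setminus\{x_j\}$ blocks every back-door path'' is not sufficient, because conditioning on (equivalently, residualizing on) a set of variables can also \emph{open} collider paths. Concretely, take $x_j\to x_m\to x_i$ with $b_{ij}=0$, and a latent $f_k$ with $f_k\to x_m$ and $f_k\to x_i$. Then $x_j$ is a non-parent ancestor of $x_i$, $c_{ij}=0$, and $x_m\in M_i\setminus\{x_j\}$; residualizing on $x_m$ activates the collider $f_k\to x_m\leftarrow x_j$, and a direct computation gives $\mathrm{Cov}(z_i,w_j)=-\lambda_{ik}\lambda_{mk}b_{mj}\,\mathrm{Var}(f_k)\mathrm{Var}(x_j)/\mathrm{Var}(x_m)\neq 0$ even though $x_j$ is not a parent of $x_i$. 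So the step you flagged as the crux would indeed fail there. To be fair, this same configuration defeats the paper's proof at the same spot (its coefficient-cancellation step is false there, since the regressor $x_m$ is correlated with the ``residual'' term $\lambda_{ik}f_k$), so you have reproduced, not worsened, the paper's implicit assumption that $x_i$ shares no latent confounder with members of $M_i$. But an airtight write-up must state that exclusion explicitly and check that no collider inside $M_i\setminus\{x_j\}$ is activated, rather than appeal to back-door blocking alone.
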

\begin{proof}
	Variable $x_{i}$ and $x_j$ are formulated as follows:
\begin{equation}
\label{equation:4-1}
	x_{i} = \sum_{x_m\in M_i} b_{im}x_{m} + \sum_n \lambda_{in}f_{n}+e_{i}
\end{equation}
\begin{equation}
\label{equation:4-2}
	x_{j} = \sum_{x_m\in M_j} b_{jm}x_{m} + \sum_n \lambda_{jn}f_{n}+e_{j}
\end{equation}
Let $\alpha_k$ denote the coefficient of $x_k\in (M_i\setminus \{x_j \} )$ when $x_i$ is regressed on $M_i\setminus \{x_j \}$. Then, 
\begin{align}
\label{equation:4-3}
	z_i &= x_i -\sum_{x_k\in (M_i\setminus \{x_j \})} \alpha_k x_{k}\nonumber\\
	&=\sum_{x_m\in M_i} b_{im}x_{m} + \sum_n \lambda_{in}f_{n}+e_{i} -\sum_{x_k\in (M_i\setminus \{x_j \})} \alpha_k x_{k}\nonumber\\
	&=b_{ij}x_j+\sum_{x_m\in (M_i\setminus \{x_j \})} b_{im}x_{m} + \sum_n \lambda_{in}f_{n}+e_{i} -\sum_{x_k\in (M_i\setminus \{x_j \})} \alpha_k x_{k}\nonumber\\
	&=b_{ij}\left( \sum_{x_m\in M_j} b_{jm}x_{m} + \sum_n \lambda_{jn}f_{n}+e_{j} \right)+\sum_{x_m\in (M_i\setminus \{x_j \})} b_{im}x_{m} + \sum_n \lambda_{in}f_{n}+e_{i}\nonumber\\
	&\ \ \ \ \ -\sum_{x_k\in (M_i\setminus \{x_j \})} \alpha_k x_{k}\nonumber\\
	&=b_{ij}\left( \sum_{x_m\in (M_j\setminus M_i)} b_{jm}x_{m} + \sum_{x_m\in (M_i\cap M_j)} b_{jm}x_{m} + \sum_n \lambda_{jn}f_{n}+e_{j} \right)+\sum_{x_m\in (M_i\setminus \{x_j \})} b_{im}x_{m} \nonumber\\
	&\ \ \ \ \ + \sum_n \lambda_{in}f_{n}+e_{i} -\sum_{x_k\in (M_i\setminus \{x_j \})} \alpha_k x_{k}\nonumber\\
	&=b_{ij}\left( \sum_{x_m\in (M_j\setminus M_i)} b_{jm}x_{m} + \sum_n \lambda_{jn}f_{n}+e_{j} \right) + b_{ij}\sum_{x_m\in (M_i\cap M_j)} b_{jm}x_{m}+\sum_{x_m\in (M_i\setminus \{x_j \})} b_{im}x_{m} \nonumber\\
	&\ \ \ \ \ + \sum_n \lambda_{in}f_{n}+e_{i} -\sum_{x_k\in (M_i\setminus \{x_j \})} \alpha_k x_{k}\nonumber\\
	&=b_{ij}\left( \sum_{x_m\in (M_j\setminus M_i)} b_{jm}x_{m} + \sum_n \lambda_{jn}f_{n}+ e_{j}\right) + \sum_n \lambda_{in}f_{n}+e_{i}
\end{align}
Let $\beta_k$ denote the coefficient of $x_k\in (M_i \cap M_j )$ when $x_j$ is regressed on $M_i \cap M_j$. Then, 
\begin{align}
\label{equation:4-4}
w_j &= x_j -\sum_{x_k\in (M_i \cap M_j)} \beta_k x_{k}\nonumber\\
&=\left(\sum_{x_m\in M_j} b_{jm}x_{m} + \sum_n \lambda_{jn}f_{n}+e_{j}\right) -\sum_{x_k\in (M_i \cap M_j)} \beta_k x_{k}\nonumber\\
&=\left(\sum_{x_m\in (M_j\setminus M_i)} b_{jm}x_{m} + \sum_{x_m\in (M_j\cap M_i)} b_{jm}x_{m} + \sum_n \lambda_{jn}f_{n}+e_{j}\right) -\sum_{x_k\in (M_i \cap M_j)} \beta_k x_{k}\nonumber\\
&= \sum_{x_m\in (M_j\setminus M_i)} b_{jm}x_{m} + \sum_n \lambda_{jn}f_{n}+ e_{j}
\end{align}
From Equations~\ref{equation:4-3}, and \ref{equation:4-4},
\begin{equation}
\label{equation:4-5}
	z_i = b_{ij} w_j + \sum_n \lambda_{in}f_{n}+e_{i}
\end{equation}
Since $x_i$ and $x_j$ do not have the same latent confounder:
\begin{equation}
\label{equation:4-6}
	\forall n, (\lambda_{in} = 0)\lor (\lambda_{jn} = 0)
\end{equation}
From Equations~\ref{equation:4-4}, \ref{equation:4-5}, and \ref{equation:4-6}, $z_i$ and $w_i$ are linearly correlated when $b_{ij}\neq 0$. It means that $x_j$ is a parent (direct cause) of $x_i$. When $b_{ij}= 0$, $z_i$ and $w_i$ are not linearly correlated. It means that $x_j$ is not a parent of $x_i$.
\end{proof}

\subsection{Identifying pairs of variables that have the same latent confounders}
RCD infers that two variables are affected by the same latent confounders when those two variables are linearly correlated even after removing the effects of all the parents. RCD identifies the pairs of variables affected by the same latent confounders by using Lemma 5. 

\begin{lem}
 Let $M_i$ and $M_j$ respectively denote the sets of ancestors of $x_i$ and $x_j$, and $P_i$ and $P_j$ respectively denote the sets of parents of $x_i$ and $x_j$. Assume that $x_i \notin M_j$ and $x_j \notin M_i$. Let $y_i$ denote the residual obtained by the multiple regression of $x_i$ on $P_i$, and $y_j$ denote the residual obtained by the multiple regression of $x_j$ on $P_j$. If $y_i$ and $y_j$ are linearly correlated, then $x_i$ and $x_j$ have the same latent confounders.
\end{lem}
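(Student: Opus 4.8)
The plan is to reduce each residual to its \emph{source form}---a linear combination of the mutually independent external effects $e_l$ and latent confounders $f_n$---and then simply read off which sources $y_i$ and $y_j$ have in common. First I would write the structural equation $x_i=\sum_{x_k\in P_i}b_{ik}x_k+\sum_n\lambda_{in}f_n+e_i$, using the fact that only parents carry a nonzero $b_{ik}$ so that the sum ranges exactly over $P_i$. Regressing $x_i$ on $P_i$ and arguing, exactly as in the derivation of Equations~\ref{equation:4-3} and~\ref{equation:4-4}, that the least-squares coefficients coincide with the structural coefficients $b_{ik}$, the parental terms cancel and the residual collapses to
\begin{equation*}
y_i=\sum_n\lambda_{in}f_n+e_i,
\end{equation*}
and symmetrically $y_j=\sum_n\lambda_{jn}f_n+e_j$. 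Note that $e_i$ survives with coefficient $1$, since it is exclusive to $x_i$ and its descendants and hence appears in no member of $P_i$; likewise for $e_j$.

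Next I would record the independent sources occurring in each residual: those of $y_i$ are $\{f_n:\lambda_{in}\neq0\}\cup\{e_i\}$ and those of $y_j$ are $\{f_n:\lambda_{jn}\neq0\}\cup\{e_j\}$. The hypotheses $x_i\notin M_j$ and $x_j\notin M_i$ are precisely what keep the error terms private: since $x_j$ is not an ancestor of $x_i$ (nor, by transitivity of ancestry, of any parent of $x_i$), $e_j$ is not a source of $y_i$, and symmetrically $e_i$ is not a source of $y_j$. Consequently any source common to $y_i$ and $y_j$ must be a latent confounder $f_n$ with $\lambda_{in}\neq0$ and $\lambda_{jn}\neq0$.

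I would finish by the contrapositive. Suppose $x_i$ and $x_j$ share no latent confounder, i.e.\ for every $n$ either $\lambda_{in}=0$ or $\lambda_{jn}=0$. Then the two source sets above are disjoint, and because all of the $e_l$ and $f_n$ are mutually independent, $y_i$ and $y_j$ are functions of disjoint collections of independent variables and are therefore independent, $y_i\indepe y_j$, and in particular uncorrelated (this is the independence direction underlying Theorem~\ref{theo:darmo}). Contrapositively, if $y_i$ and $y_j$ are linearly correlated then some $f_n$ satisfies $\lambda_{in}\neq0$ and $\lambda_{jn}\neq0$, so $x_i$ and $x_j$ are affected by the same latent confounder, which is the claim.

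The main obstacle is the very first step: justifying that regressing $x_i$ on $P_i$ returns the structural coefficients, so that $y_i$ has the clean form $\sum_n\lambda_{in}f_n+e_i$ rather than leaking the noise of parents back into the residual. This is what forces the \emph{parent} set to be used rather than an arbitrary conditioning set, and it rests on the parents not being confounded with $x_i$ (as secured by the parent-identification step through Lemma~4), so that $\sum_n\lambda_{in}f_n+e_i$ is uncorrelated with every $x_k\in P_i$ and the least-squares fit is unbiased. Once this residual form is established, the bookkeeping of shared sources and the final independence argument are routine.
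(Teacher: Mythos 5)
Your proposal is correct and follows essentially the same route as the paper: both reduce $y_i$ and $y_j$ to the source form $\sum_n\lambda_{in}f_n+e_i$ and $\sum_n\lambda_{jn}f_n+e_j$ by regressing on the parent sets, and then argue (you by contraposition, the paper by contradiction) that absence of a shared confounder would make the two residuals functions of disjoint collections of independent sources, hence independent and uncorrelated, contradicting the assumed correlation. The only difference is that you explicitly flag and justify the step the paper takes for granted, namely that the least-squares coefficients on $P_i$ coincide with the structural coefficients $b_{ik}$ so that the parental terms cancel exactly; this is a point of added care, not a different method.
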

\begin{proof}
	Variable $x_{i}$ and $x_j$ are formulated as follows:
\begin{equation}
\label{equation:5-1}
	x_{i} = \sum_{x_m\in P_i} b_{im}x_{m} + \sum_n \lambda_{in}f_{n}+e_{i}\nonumber
\end{equation}
\begin{equation}
\label{equation:5-2}
	x_{j} = \sum_{x_m\in P_j} b_{jm}x_{m} + \sum_n \lambda_{jn}f_{n}+e_{j}\nonumber
\end{equation}
Let $\alpha_k$ denote the coefficient of $x_k\in P_i$ when $x_i$ is regressed on $P_i$. Then, 
\begin{align}
	y_i &= x_i - \sum_{x_k\in P_i} \alpha_k x_{k}\nonumber\\
	&= \sum_{x_m\in P_i} b_{im}x_{m} + \sum_n \lambda_{in}f_{n}+e_{i} - \sum_{x_k\in P_i} \alpha_k x_{k}\nonumber\\
	&= \sum_n \lambda_{in}f_{n}+e_{i}\nonumber
\end{align}
Let $\beta_k$ denote the coefficient of $x_k\in P_j$ when $x_j$ is regressed on $P_j$. Then, 
\begin{align}
	y_j &= x_j -\sum_{x_k\in P_j} \beta_k x_{k}\nonumber\\
	&= \sum_{x_m\in P_j} b_{jm}x_{m} + \sum_n \lambda_{jn}f_{n}+e_{j} - \sum_{x_k\in P_j} \beta_k x_{k}\nonumber\\
	&= \sum_n \lambda_{jn}f_{n}+e_{j}\nonumber
\end{align}
Variables $e_i$ and $e_i$ are independent of each other. If we assume that $x_i$ and $x_j$ do not have the same latent confounder, then, 
\begin{equation}
	\forall n, (\lambda_{in} = 0)\lor (\lambda_{jn} = 0)\nonumber
\end{equation}
Then, $y_i$ and $y_j$ are mutually independent. However, this contradicts the assumption of Lemma 5 that $y_i$ and $y_j$ are linearly correlated. Therefore, $x_i$ and $x_j$ have the same latent confounders.
\end{proof}

\section{Performance evaluation}
\label{section:evaluation}
We evaluated the performance of RCD relative to the existing methods in terms of how accurately it finds the pairs of variables that are affected by the same latent confounders and how accurately it infers the causal directions of the pairs of variables that are not affected by the same latent confounder. In regard to the latent confounders, we compared RCD with FCI~\cite{fci}, RFCI~\cite{colombo2012}, and GFCI~\cite{ogarrio2016}. In addition to these three methods, we compared RCD with PC~\cite{Spirtes91}, GES~\cite{chickering2002}, DirectLiNGAM~\cite{shimizu2011}, and RESIT~\cite{peters2014} to evaluate the accuracy of causal directions. In the following sections, DirectLiNGAM is called LiNGAM for simplicity.

\subsection{Performance on simulated structures}

\begin{figure*}[h]
\centering
\includegraphics[width=12.0cm]{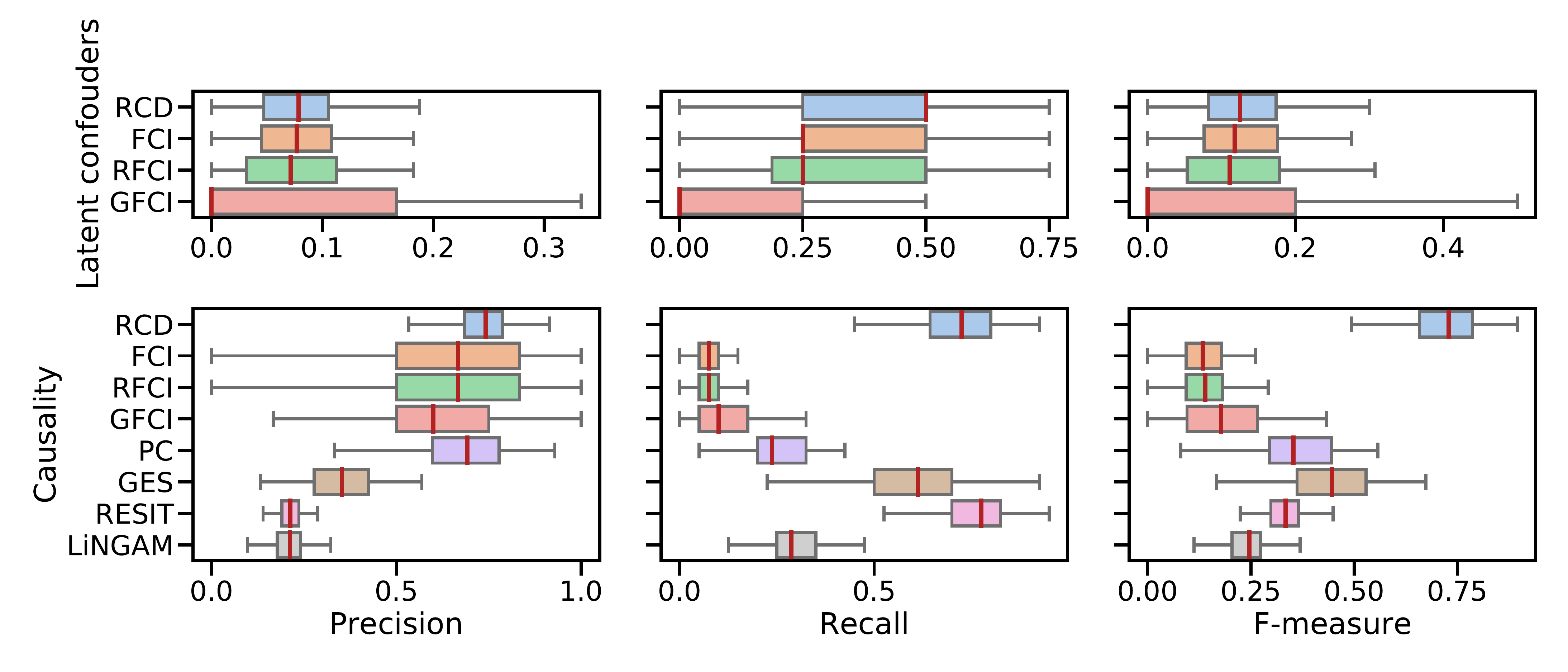}
\caption{Performance evaluation on causal graphs using simulated data: The vertical red lines indicate the median values of the results. The evaluation of the latent confounders corresponds to the evaluation of bi-directed arrows. The evaluation of causality corresponds to the evaluation of directed arrows.}
\label{figure:simulationresult}
\end{figure*}

We performed 100 experiments to evaluate RCD relative to the existing methods. We prepared 300 sets of samples for each experiment. The data of each experiment were generated as follows: The data generation process was modeled the same as Equation~\ref{equation:lingamco}. The number of observed variables $x_i$ was set to 20 and the number of latent confounders $f_k$ was set to 4. Let $X$ and $Y$ denote the stochastic variables, and assume that $Y\sim N(0.0,0.5)$ and $X=Y^3$. We used the random samples of $X$ for $e_i$ and $f_k$ because $X$ is non-Gaussian. The number of causal arrows between the observed variables is 40, and the start point and the end point of each causal arrow were randomly selected. We randomly drew two causal arrows from each latent confounder to the observed variables. Let $Z$ denote a stochastic variable that comes from a uniform distribution on $[-1.0,-0.5]$ and $[0.5,1.0]$. We used the random samples of $Z$ for $b_{ij}$ and $\lambda_{ik}$. \par
We evaluated (1) how accurately each method infers the pairs of variables that are affected by the same latent confounders (called the evaluation of latent confounders), and (2) how accurately each method infers causality between the observed variables that are not affected by the same latent confounder (called the evaluation of causality). The evaluation of latent confounders corresponds to the evaluation of bi-directed arrows in a causal graph, and the evaluation of causality corresponds to the evaluation of directed arrows. We used precision, recall, and F-measure as evaluation measures. In regard to the evaluation of latent confounders, true positive (TP) is the number of true bi-directed arrows that are correctly inferred. In regard to causality, TP is the number of true directed arrows that a method correctly infers in terms of their positions and directions. Precision is TP divided by the number of estimations, and recall is TP divided by the number of all true arrows. F-measure is defined as $\text{F-measure} = 2 \cdot \text{precision} \cdot \text{recall} / (\text{precision} + \text{recall})$.\par
The arguments of RCD, that is, $\alpha_{\text{C}}$ (alpha level for Pearson's correlation), $\alpha_{\text{I}}$ (alpha level for independence), $\alpha_{\text{S}}$ (alpha level for the Shapiro-Wilk test), and $n$ (maximum number of explanatory variables for multiple linear regression) were set as $\alpha_C=0.01, \alpha_I=0.01, \alpha_S=0.01,$ and $n=2$.\par
In regard to the types of edges, FCI, RFCI, and GFCI produce partial ancestral graphs (PAGs) that include six types of edges: $\rightarrow$ (directed), $\leftrightarrow$ (bi-directed), $\rightarrowCirc$ (partially directed), $\circHyphenCirc$ (nondirected), and $\circHyphen$ (partially undirected). In the evaluation, we only used the directed and bi-directed edges. PC, GES, LiNGAM, and RESIT produce causal graphs only with the directed edges; thus, we did not evaluate those methods in terms of latent confounders.\par
The box plots in Figure~\ref{figure:simulationresult} display the results. The vertical red lines indicate the median values. Note that some median values are the same as the upper or lower quartiles. For example, the median and the upper quartile of the recalls of RCD in the results of latent confounders are the same. It means that the results between the median and the upper quartile are the same. In regard to the evaluation of latent confounders, the precision, recall, and F-measure values are almost the same for RCD, FCI, RFCI, and GFCI, but the medians of precision, recall, and F-measure values of RCD are the highest among them. In regard to causality, RCD scores the highest medians of the precision and F-measure values among all the methods, and the median of recall for RCD is the second highest next to RESIT.\par
The results suggest that RCD does not greatly improve the performance metrics compared to the existing methods. However, there is no other method that has the highest or the second highest performance for each metric. FCI, RFCI, and GFCI perform as well as RCD in terms of finding the pairs of variables that are affected by the same latent confounders, but they do not perform well in terms of the recall of causality. In addition, no other method performs well in terms of both precision and recall of causality. RCD can successfully find the pairs of variables that are affected by the same latent confounders and identify the causal direction between variables that are not affected by the same latent confounder.

\subsection{Performance on real-world structures}

Causal structures in the real-world are often very complex. Therefore, RCD likely produces a causal graph where each pair of observed variables is connected with a bi-directed arrow. The result of identifying latent confounders is affected by the threshold of the p-value for the independence test, $\alpha_I$. If $\alpha_I$ is too large or too small, then all the variable pairs are likely concluded to have the same latent confounders. Therefore, we need to find the most appropriate value of $\alpha_I$. We increased $k$ from 1 to 25 and set $\alpha_I$ as $\alpha_I = 0.1^{k}$ and repeated the process. We adopted a result that has the smallest number of pairs of variables with the same latent confounders.\par

\begin{figure}[t]
\centering
\includegraphics[width=7.0cm]{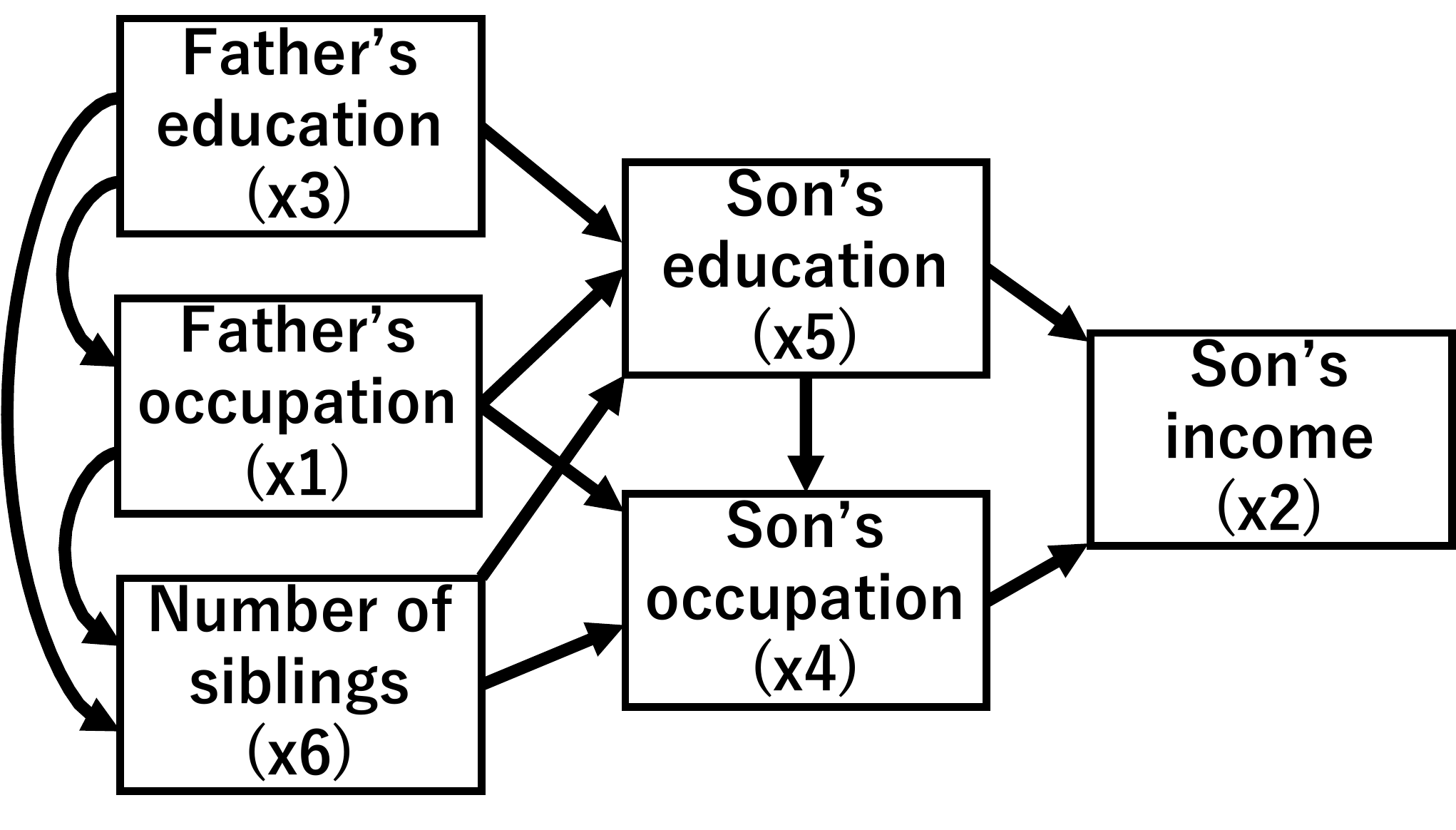}
\caption{Variables and causal relations in the General Social Survey data set used for the evaluation.}
\label{figure:socio}
\end{figure}

\begin{table*}[t]
\footnotesize
\caption{The results of the application to sociological data.}
\label{table:socioresult}
\begin{center}
\begin{tabular}{|c|c|c|c|c|c|c|}\hline
& \multicolumn{3}{|c|}{Bidirected arrows (Latent confounders)}  & \multicolumn{3}{|c|}{Directed arrows (Causality)} \\\cline{2-7}
Method & \# of estimation & \# of successes & Precision & \# of estimation & \# of successes & Precision\\\hline
RCD & 4 & 4 & 1.0  & 5 & 4 & 0.8 \\
FCI & 3 & 3 & 1.0  & 3 & 1 & 0.3 \\
RFCI & 3 & 3 & 1.0  & 3 & 1 & 0.3 \\
GFCI & 0 & 0 & 0.0  & 0 & 0 & 0.0 \\
PC & - & - & - & 2 & 1 & 0.5 \\
GES & - & - & - & 2 & 1 & 0.5 \\
RESIT & - & - & - & 12 & 4 & 0.3 \\
LiNGAM & - & - & - & 5 & 4 & 0.8 \\\hline

\end{tabular}
\end{center}
\end{table*}

We analyzed the General Social Survey data set, taken from a sociological data repository.\footnote{http://www.norc.org/GSS+Website/} The data have been used for the evaluation of DirectLiNGAM in Shimizu et al.~\cite{shimizu2011}. The sample size is 1380. The variables and the possible directions are shown in Figure~\ref{figure:socio}. The directions were determined based on the domain knowledge in Duncan et al.~\cite{duncan1972} and temporal orders.\par

We evaluated the directed arrows (causality) in the causal graphs produced by RCD and the existing methods, based on the directed arrows in Figure~\ref{figure:socio}. In addition, we evaluated the bi-directed arrows in causal graphs produced by the methods as accurate inference if they exist in Figure~\ref{figure:socio} as directed arrows.\par
The results are listed in Table~\ref{table:socioresult}. In regard to bi-directed arrows (latent confounders), the number of successful inferences by RCD is the highest, and the precisions of RCD, FCI, and RFCI are all 1.0. In regard to the directed arrows (causality), the numbers of the successful arrows of RCD, RESIT, and LiNGAM are the highest. The precisions of RCD and LiNGAM are also the highest.
The causal graph produced by RCD is shown in Figure~\ref{figure:sociorcd}. The dashed arrow $x_3 \leftarrow x_5$ is the incorrect inference, but the others are correct.\par
RCD performs the best among the existing methods in terms of both identifying the pairs of variables that are affected by the same latent confounders and identifying the causal direction of the pairs of variables that are not affected by the same latent confounder.

\begin{figure}[t]
\centering
\includegraphics[width=7.0cm]{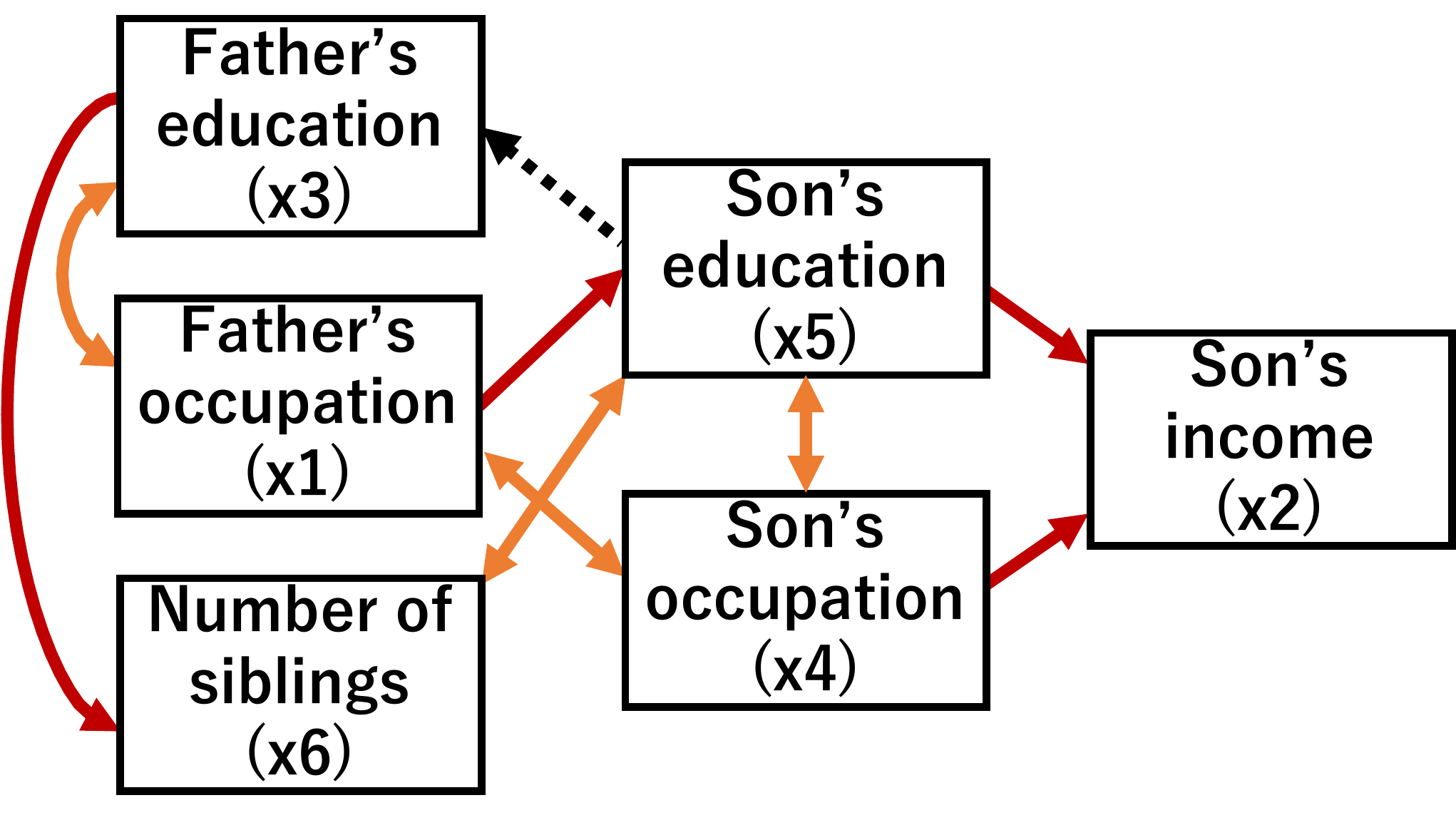}
\caption{Causal graph produced by RCD: The dashed arrow, $x_3 \leftarrow x_5$ is incorrect inference, but the other arrows are reasonable based on Figure~\ref{figure:socio}}
\label{figure:sociorcd}
\end{figure}

\section{Conclusion}
\label{section:conclusion}
We developed a method called repetitive causal discovery (RCD) that produces a causal graph where a directed arrow indicates the causal direction between the observed variables, and a bi-directed arrow indicates a pair of variables have the same confounder. RCD produces a causal graph by (1) finding the ancestors of each variable, (2) distinguishing the parents from the indirect causes, and (3) identifying the pairs of variables that have the same latent confounders. We confirmed that RCD effectively analyzes data confounded by unobserved variables through validations using simulated and real-world data.\par
In this paper, we did not discuss the utilization of prior knowledge. However, it is possible to make use of prior knowledge of causal relations in practical applications of RCD. In this study, information about the ancestors of each variable was initialized to be an empty set. If we have prior knowledge about causal relations, the information about the ancestors of each variable that RCD retains can be set according to the prior knowledge.\par
There is still room for improvement in the RCD method. The optimal settings of the arguments of RCD and the extension of RCD for nonlinear causal relations will be investigated in future studies.

\section{Acknowledgments}
We thank Dr. Samuel Y. Wang for his useful comments on a previous version of our algorithm proposed in \cite{maeda20a}.
Takashi Nicholas Maeda has been partially supported by Grant-in-Aid for Scientific Research (C) from Japan Society for the Promotion of Science (JSPS) \#20K19872. Shohei Shimizu has been partially supported by ONRG NICOP N62909-17-1-2034 and Grant-in-Aid for Scientific Research (C) from Japan Society for the Promotion of Science (JSPS) \#16K00045 and \#20K11708. 

\section*{References}

\bibliography{ref}

\end{document}